\theoremstyle{definition}
\newtheorem{definition}{Definition}%[section]
\newtheorem{lemma}{Lemma}%[section]
\renewcommand{\qedsymbol}{$\blacksquare$}
\newcommand{\algf}{\texttt{Maximal}}
\newcommand{\algs}{\texttt{Greedy}}
\theoremstyle{definition}
\theoremstyle{remark}
\begin{document}

\title{Feature Selection for Classification under Anonymity Constraint}

\author{Baichuan Zhang $^{*}$,
        Noman Mohammed $^{**}$, 
	Vachik S. Dave $^{*}$, 
	Mohammad Al Hasan $^{*}$}
\address{$^{*}$Department of Computer and Information Science, 
Indiana University Purdue University Indianapolis, IN, USA, 46202 \\ 
  $^{**}$Department of Computer Science, Manitoba University, Canada \\
E-mail: {\small \tt{bz3@umail.iu.edu}}, {\small \tt{noman@cs.umanitoba.ca}}, {\small \tt{vsdave@iupui.edu}}, {\small \tt{alhasan@iupui.edu}}
}
%% Avoid if possible

% \thanks{My thanks to those that deserve them.} 

%% This part here is not needed for submission
%\TDPRunningAuthors{First Author, Second Author, Third Author}
%\TDPRunningTitle{Running title}
%\TDPThisVolume{1}
%\TDPThisYear{2008}
%\TDPFirstPageNumber{1000}
%\TDPSubmissionDates{Received 29 February 2008; received in revised form 30 February 2008; accepted 1 March 2008}
\maketitle

\begin{abstract}
Over the last decade, proliferation of various online platforms and their
increasing adoption by billions of users have heightened the privacy risk of a
user enormously. In fact, security researchers have shown that sparse microdata
containing information about online activities of a user although anonymous,
can still be used to disclose the identity of the user by cross-referencing the
data with other data sources. To preserve the privacy of a user, in existing
works several methods ($k$-anonymity, $\ell$-diversity, differential privacy)
are proposed for ensuring that a dataset bears small identity disclosure risk.
However, the majority of these methods modify the data in isolation, without
considering their utility in subsequent knowledge discovery tasks, which makes
these datasets less informative.  In this work, we consider labeled data that
are generally used for classification, and propose two methods for feature
selection considering two goals: first, on the reduced feature set the data has
small disclosure risk, and second, the utility of the data is preserved for
performing a classification task.  Experimental results on various real-world
datasets show that the method is effective and useful in practice.
\end{abstract}

\begin{keywords}
Privacy Preserving Feature Selection, $k$-anonymity by containment, Maximal Itemset Mining, Greedy Algorithm, Binary Classification
\end{keywords}

\section{Introduction}

Over the last decade, with the proliferation of various online platforms, such
as web search, eCommerce, social networking, micro-messaging, streaming
entertainment and cloud storage, the digital footprint of today's Internet user
has grown at an unprecedented rate. At the same time, the availability of
sophisticated computing paradigm and advanced machine learning algorithms have
enabled the platform owners to mine and analyze tera-bytes of digital footprint
data for building various predictive analytics and personalization products.
For example, search engines and social network platforms use search keywords
for providing sponsored advertisements that are personalized for a user's
information need; e-commerce platforms use visitor's search history for
bolstering their merchandising effort; streaming entertainment providers use
people's rating data for building future product or service recommendation.
However, the impressive personalization of services of various online platforms
enlighten us as much, as they do make us feel insecure, which stems from
knowing the fact that individual's online behaviors are stored within these
companies, and an individual, more often, is not aware of the specific
information about themselves that is being stored.

The key reason for a web user's insecurity over the digital footprint data
(also known as microdata) is that such data contain sensitive information. For
instance, a person's online search about a disease medication may insinuate
that she may be suffering from that disease; a fact that she rather not want to
disclose.  Similarly, people's choice of movies, their recent purchases, etc.
reveal enormous information regarding their background, preference and
lifestyle. Arguably microdata exclude biographical information, but due to the
sheer size of our digital footprint the identity of a person can still be
recovered from these data by cross-correlation with other data sources or by
using publicly available background information. In this sense, these apparently
non-sensitive attributes can serve as a quasi-identifier. For an example, Narayanan et
al.~\cite{Narayanan.Shmatikov:08} have identified a Netflix subscriber from 
his anonymous movie rating by using Internet Movie Database (IMDB) as the 
source of background information. For the case of Netflix, anonymous
microdata was released publicly for facilitating Netflix prize competition,
however even if the data is not released, there is always a concern that
people's digital footprint data can be abused within the company by employees
or by external hackers, who have malicious intents.

For the case of microdata, the identity disclosure risk is high due to some key
properties of such a dataset---high-dimensionality and sparsity.  Sparsity
stands for the fact that for a given record there is rarely any record that is
similar to the given record considering full multi-dimensional space.  It is
also shown that in high-dimensional data, the ratio of distance to the nearest
neighbor and the farthest neighbor is almost one, i.e., all the points are far
from each other~\cite{Aggarwal:05}. Due to this fact, 
privacy is difficult to achieve on such datasets. A widely used privacy metric that quantifies the
disclosure risk of a given data instance is $k$-anonymity\cite{Samarati:01},
which requires that for any data instance in a
dataset, there are at least $k-1$ distinct data instances sharing the same
feature vector---thus ensuring that unwanted personal information cannot
be disclosed merely through the feature vector. However, for high dimensional
data, $k$-anonymization is difficult to achieve even for a reasonable value of
$k$ (say 5); typically, value based generalization or attribute based
generalization is applied so that $k$-anonymity is achieved, but Aggrawal has
proved both theoretically and experimentally that for high dimensional data
$k$-anonymity is not a viable solution even for a $k$ value of
2~\cite{Aggarwal:05}. He has also shown that as data dimensionality increases,
entire discriminatory information in the data is lost during the process of
$k$-anonymization, which severely limits the data utility.
Evidently, finding a good balance between a user's privacy
and the utility of high dimensional microdata is an unsolved problem---which is
the primary focus of this paper.

A key observation of a real-life high dimensional dataset is that it exhibits
high clustering tendency in many sub-spaces of the data, even though over the
full dimension the dataset is very sparse. Thus an alternative technique for
protecting identity disclosure on such data can be finding a subset of
features, such that when projecting on these set of features an acceptable
level of anonymity can be achieved. One can view this as column suppression
instead of more commonly used row suppression for achieving
$k$-anonymity~\cite{Samarati:01}. Now for the case of feature selection for a
given $k$, there may exist many sub-spaces for which a given dataset satisfies
$k$-anonymity, but our objective is to obtain a set of features such that
projecting on this set offers the maximum utility of the dataset in terms of a
supervised classification task.

Consider the toy dataset that is shown in Table~\ref{tab:toydata}. Each row
represents a person, and each column (except the first and the last) represents
a keyword.  If a cell entry is `1' then the keyword at the corresponding column
is associated with the person at the corresponding row. Reader may think this
table as a tabular representation of search log of an eCommerce platform, where
the `1' under $e_i$ column stands for the fact that the corresponding user has
searched using the keyword $x_j$ within a given period of time, and `0'
represents otherwise. The last column represents whether this user has made a
purchase over the same time period. The platform owner wants to solve a
classification problem to predict which of the users are more likely to make a
purchase. 

Say, the platform owner wants to protect the identity of its site visitor by
making the dataset $k$-anonymous.  Now, for this toy dataset, if he chooses
$k=2$, this dataset is not $k$-anonymous.  For instance, the feature vector of
$e_3$, $10011$ is unique in this dataset.  However, the dataset is
$k$-anonymous for the same $k$ under the subspace spanned by $\{x_3, x_4,
x_5\}$. It is also $k$-anonymous (again for the same $k$) under the subspace
spanned by $\{x_1, x_2, x_5\}$ (See Table~\ref{tab:toyfeatures}).  Among these two
choices, the latter subspace is probably a better choice, as the features in
this set are better discriminator than the features in the former set with
respect to the class-label. For feature set $\{x_1, x_2, x_5\}$, if we
associate the value `101' with the +1 label, and the value `111' with the -1
label, we make only 1 mistake out of 6. On the other hand for feature set
$\{x_3, x_4, x_5\}$, no good correlation exists between the feature values and
the class labels. 

Research problem in the above task is the selection of optimal binary feature set
for utility preserving entity anonymization, where the utility is considered
with respect to the classification performance and the privacy is guaranteed by
enforcing a $k$-anonymity like constraint~\cite{Sweeney:02}. In existing works,
$k$-anonymity is achieved by suppression or generalization of cell values,
whereas in this work we consider to achieve the same by selecting an optimal subset 
of features that maximizes the classification utility of the dataset. Note that,
maximizing the utility of the dataset is the main objective of this task, 
privacy is simply a constraint which a user enforces by setting the value of a privacy parameter 
based on the application domain and the user's judgment. For the privacy model, we 
choose $k$-anonymity by containment in short, $k$-AC (definition
forthcoming), where $k$ is the user-defined privacy parameter, which has similar meaning
as it has in traditional $k$-anonymity.

Our choice of $k$-anonymity like metric over more
theoretical counterparts, such as, differential privacy (DP) is due to the
pragmatic reason that all existing privacy laws and regulations, such as, HIPAA
(Health Information Portability and Accountability Act) and PHIPA (Personal
Health Information Protection Act) use $k$-anonymity. Also, $k$-anonymity is
flexible and simple, thus enabling people to understand and apply it for almost
any real-life privacy preserving needs; on the contrary, DP
based methods use a privacy parameter ($\epsilon$), which has no obvious
interpretation and even by the admission of original author of DP, choosing an 
appropriate value for this parameter is
difficult~\cite{Dwork:08}. Moreover, differential privacy based methods add
noise to the data entities, but the decision makers in many application domains
(such as, health care), where privacy is an important issue, are quite
uncomfortable to the idea of noise imputation~\cite{Dankar.Emam:13}. 
Finally, authors in~\cite{Fienberg.Yang.ea:10} state that 
%differential privacy provides guarantees for the method and 
%not for the specific data at hand, and due to this reason, 
differential privacy is not suitable
for protecting large sparse tables produced by statistics agencies and sampling 
organizations---this disqualifies differential privacy as a privacy model for protecting
sparse and very high dimensional user's microdata from the e-commerce and Internet search 
engines.

% * <bz3@umail.iu.edu> 2017-02-02T15:10:12.712Z:
% 
% As second reviewer mentioned, statement above "differential privacy provides guarantees for the method and 
% not for the specific data at hand" is not correct. The reviewer is correct. I delete this statement. Actually we guarantee DP for the data during the comparison.
% 
% ^.

% * <bz3@umail.iu.edu> 2017-02-03T14:36:36.666Z:
% 
% second reviewer also mentioned that a recent paper figure out a way to select \epsilon in differential privacy. However, I believe choosing an appropriate \epsilon is a general challenging task for DP based real-life applications. I cite that paper in related work section:
% 
% Jaewoo Lee and Chris Clifton. Differential identifiability. KDD 2012: 1041-1049
% 
% ^.

\subsection{Our Contributions}

In this work, we consider the task of feature selection under privacy
constraint. This is a challenging task, as it is well-known that privacy is
always at odds with the utility of a knowledge-based system, and finding the
right balance is a difficult
task~\cite{Kifer.Gehrke:06,Kifer.Machanavajjhala:11}.  Besides, feature
selection itself, without considering the privacy constraint, is an ${\cal
NP}$-Hard problem~\cite{Guyon.Elisseeff:03,li2013deploying,li2013mining}.

Given a classification dataset with binary features and an integer $k$, our proposed
solutions find a subset of features such that after projecting each instance on
these subsets each entity in the dataset satisfies a privacy constraint, called
$k$-anonymous by containment ($k$-AC). Our proposed privacy constraint $k$-AC is an
adapted version of $k$-anonymity, which strikes the correct balance between
disclosure risk and dataset utility, and it is particularly suitable for high
dimensional binary data.  We also propose two algorithms: \algf\ and \algs. The
first is a maximal itemset mining based method and the second is a greedy
incremental approach, both respecting the user-defined AC constraints.  

The algorithms that we propose are particularly intended for high dimensional
sparse microdata where the features are binary. The nature of such data is
different from a typical dataset that is considered in many of the existing
works on privacy preserving data disclosure mechanism. 
The first difference is that existing works consider two kinds of attributes, 
sensitive and nonsensitive, whereas for our dataset all attributes are considered to be sensitive, 
and any subset of published attributes can be used by an attacker to 
de-anonymize one or more entities in the dataset using probabilistic inference
methodologies. On the other hand, the unselected attributes are not published so they cannot be used
by an attacker to de-anonymize an entity.
Second, we only consider binary attributes, which enable us to provide efficient 
algorithms and an interesting anonymization model. Considering only binary attributes
may sound an undue restriction, but in reality binary attributes are adequate (and often
preferred) when modeling online behavior of a person,
such as `like' in Facebook, `bought' in Amazon, and `click' in Google
advertisement. Also, collecting explicit user feedback in terms of
frequency data (say, the number of times a search keyword is used )
may be costly in many online platforms. 
Nevertheless, as shown
in~\cite{Narayanan.Shmatikov:08}, binary attributes are sufficient for an
attacker to de-anonymize a person using high dimensional microdata, so safeguarding
user privacy before disclosing such dataset is important.

The contributions of our work are outlined below:

\begin{enumerate}

\item We propose a novel method for entity anonymization using feature selection
over a set of binary attributes from a two-class classification dataset.
For this, we design a new anonymization model, named {\em $k$-anonymity by containment ($k$-AC)},
which is particularly suitable for high-dimensional binary microdata. 

\item We propose two methods for solving the above task and show experimental
results to validate the effectiveness of these methods.

\item We show the utility of the proposed methods with three real-life
applications; specifically, we show how the privacy-aware feature selection
affects their performance.
\end{enumerate}

\section{Privacy Basics}

Given a dataset $D$, where each row corresponds to a
person, and each column contains 
non-public information about that person; examples include disease, medication,
sexual orientation, etc. In the context of online behavior, the search
keywords, or purchase history of a person may be such information.
Privacy preserving data publishing methodologies make it difficult for an
attacker to de-anonymize the identity of a person who is in the dataset. For
de-anonymization, an attacker generally uses a set of attributes that act
almost like a key and it uniquely identifies some individual in the datasets.
These attributes are called {\em quasi-identifiers}.
{\em $k$-anonymity} is a well-known privacy metric defined as below.

\begin{definition}[$k$-anonymity] {\em A dataset $D$ satisfies $k$-anonymity if
for any row entity $e \in D$ there exist at least $k-1$ other entities that have
the same values as $e$ for every possible quasi-identifiers.}
\end{definition}

The database in Table~\ref{tab:toydata} is not $2$-anonymous, as the row entity
$e_3$ is unique considering the entire attribute-set $\{x_1, x_2, x_3,x_4,
x_5\}$ as quasi-identifier. On the other hand, It is $2$-anonymous for both the
datasets (one with Feature Set-1 and the other with Feature Set-2) in
Table~\ref{tab:toyfeatures}. For numerical or categorical attributes, a
process, called {\em generalization} and/or suppression (row or cell value) are
used for achieving $k$-anonymity. Generalization partitions the values of an
attribute into disjoint buckets and identifies each bucket with a value.
Suppression either hides the entire row or some of its cell values, so that the
anonymity of that entity can be maintained. Generalization and suppression make
anonymous group, where all the entities in that group have the same value for
every possible quasi-identifier, and for a dataset to be $k$-anonymous, the
size of each of such groups is at least $k$.  In this work we consider binary
attributes; each such attribute has only two values, 0 and 1. For binary
attributes, value
based generalization relegates to the process of column suppression, which incurs 
a loss of data utility. In fact,
any form of generalization based $k$-anonymization incurs a loss in data utility due to the
decrement of data variance or due to the loss of discernibility.  Suppression
of a row is also a loss as in this case the entire row entity is not
discernible for any of the remaining entities in the dataset.
Unfortunately, most of existing methods for achieving $k$-anonymity using
both generalization and suppression operations do not consider an utility measure targeting 
supervised classification task.

There are some security attacks against which $k$-anonymity is vulnerable. For
example, $k$-anonymity is susceptible to both homogeneity and background
knowledge based attacks. More importantly, $k$-anonymity does not provide
statistical guaranty about anonymity which can be obtained by using
$\epsilon$-differential privacy~\cite{Dwork:08}---a method which provides
strong privacy guarantees independent of an adversary's background knowledge.
There are existing methods that adopt differential privacy principle for
data publishing. Authors in~\cite{Barak.Dwork.ea:07,Dwork.Frank.ea:06} propose Laplace mechanism to
publish the contingency tables by adding noise generated from a Laplace
distribution. However, such methods suffer from the utility loss due to the
large amount of added noise during the sanitization process. To
resolve this issue, ~\cite{Frank.Kunal:07} proposes to utilize exponential
mechanism for maximizing the trade-off between differential privacy and data
utility. However, the selection of utility function used in the exponential mechanism based 
approach strongly affects the data utility in subsequent data analysis task. 
In this work, we compare the performance of our proposed privacy model, namely $k$-AC, 
with both Laplace and exponential based differential privacy frameworks (See Section~\ref{sec:5.2})
to show that $k$-AC better preserves the data utility than the differential privacy based methods.

A few works~\cite{Iyengar:02,Prasser:2016:LUA:2993206.2993209} exist which consider classification utility together with $k$-anonymity
based privacy model, but none of them consider feature selection which is the main
focus of this work. In one of the earliest works, 
Iyengar~\cite{Iyengar:02} solves $k$-anonymization through generalization and suppression while
minimizing a proposed utility metric called $CM$ (Classification Metric) using
a genetic algorithm which provides no optimality guaranty. The CM is defined as below:

\begin{definition}[Classification Metric~\cite{Iyengar:02}]~\label{def:cm}{\em Classification
metric ($CM$) is a utility metric for classification dataset, which assigns a
penalty of 1 for each suppressed entity, and for each non-suppressed entity it
assigns a penalty of 1 if those entities belong to the minority class within
its anonymous group. $CM$ value is equal to the sum of penalties over all the
entities.} \end{definition}

In this work, we compare the performance of our work with CM based privacy-aware
utillty metric.

\begin{table}
\centering
\begin{tabular} {l c c c c c l}\hline
\toprule
User & $x_1$ & $x_2$ & $x_3$ & $x_4$ & $x_5$ & Class\\ \midrule
$e_1$ &  1    &   0   &   1   &   0   &   1   &   $+1$\\ 
$e_2$ &  1    &   0   &   1   &   0   &   1   &   $-1$\\
$e_3$ &  1    &   0   &   0   &   1   &   1   &   $+1$\\
$e_4$ &  1    &   0   &   1   &   0   &   1   &   $+1$\\
$e_5$ &  1    &   1   &   1   &   0   &   1   &   $-1$\\
$e_6$ &  1    &   1   &   0   &   1   &   1   &   $-1$\\ \bottomrule
\end{tabular}
\caption{A toy 2-class dataset with binary feature-set}
\label{tab:toydata}
\end{table}

\begin{table}
\centering
\begin{tabular} {l c c c c c c c}
\toprule
\multirow{2}{*}{User} &  \multicolumn{3}{c}{Feature Set-1} & \multirow{2}{*}{Class} & 
\multicolumn{3}{c}{Feature Set-2}  \\  \cline{2-4} \cline{6-8}
{} & $x_1$ & $x_2$ & $x_5$ & {} & $x_3$ & $x_4$ & $x_5$ \\ \midrule
$e_1$ & 1 & 0 & 1 & $+1$ & 1 & 0 & 1\\ 
$e_2$ & 1 & 0 & 1 & $-1$ & 1 & 0 & 1\\ 
$e_3$ & 1 & 0 & 1 & $+1$ & 0 & 1 & 1\\ 
$e_4$ & 1 & 0 & 1 & $+1$ & 1 & 0 & 1\\ 
$e_5$ & 1 & 1 & 1 & $-1$ & 1 & 0 & 1\\ 
$e_6$ & 1 & 1 & 1 & $-1$ & 0 & 1 & 1\\ \bottomrule
\end{tabular}
\caption{Projections of the dataset in Table~\ref{tab:toydata} on two feature-sets (Feature Set-1
and Feature Set-2)}
\label{tab:toyfeatures}
%\vspace{-0.3in}
\end{table}

\section{Problem Statement}\label{sec:problemstatement}

Given a classification dataset with binary attributes, our objective is to find
a subset of attributes which increase the non-disclosure protection of the row
entities, and at the same time maintain the classification utility of the
dataset without suppressing any of the row entities. In this section we will
provide a formal definition of the problem. 

We define a database $D (E, I)$ as a binary relation between a set of entities ($E$) and a
set of features ($I$); thus, $D \subseteq E \times I$, where $E = \{e_{1}, e_{2},
\cdots, e_{n}\}$ and $I = \{x_{1}, x_{2}, \cdots, x_{d}\}$; $n$ and $d$ are the
number of entities and the number of features, respectively. The database $D$
can also be represented as a $n \times d$ binary data matrix,  where the rows
correspond to the entities, and the columns correspond to the features. For an
entity $e_i \in E$, and a feature $x_j \in I$, if $\langle e_i, x_j \rangle \in D$, the
corresponding data matrix entry $D(e_i,x_j) = 1$, otherwise $D(e_i,x_j)=0$. Thus each
row of $D$ is a binary vector of size $d$ in which the $1$ entries correspond
to the set of features with which the corresponding row entity is associated.
In a classification dataset, besides the attributes, the entities are also
associated to a class label which is a category value. In this task we assume a
binary class label $\{C_1,C_2\}$. A typical supervised learning task is to use the 
features $I$ to predict the class label of an entity. 

We say that an entity $e_i \in E$ contains a set of features $X = \{x_{i1},
x_{i2}, \cdots, x_{il}\}$, if $D(e_i, x_{ik}) = 1$ for all $k = 1, 2,
\cdots, l$; set $X$ is also called {\em containment set} of the entity $e_i$.

\begin{definition}[Containment Set] {\em Given a binary dataset, $D(E, I)$, the
containment set of a row entity $e \in E$, represented as $CS_D(e)$,  is the
set of attributes $X \subseteq I$ such that $\forall x \in X, D(e,x)=1$, and $\forall y \in I-X$,
$D(e, y) = 0$.}
\end{definition}

When the dataset $D$ is clear from the context we will simply write $CS(e)$
instead of $CS_D(e)$ to represent the containment set of $e$.

\begin{definition}[$k$-anonymity by containment] \label{def:kca} {\em In a binary
dataset $D(E, I)$ and for a given positive integer $k$, an entity $e \in E$ satisfies
$k$-anonymity by containment if there exists a set of entities $F \subseteq E$,
such that $ e \notin F \wedge |F|\ge k-1 \wedge \forall f \in F,  CS_D(f)
\supseteq CS_D(e)$. In other words, their exist at least $k-1$ other entities
in $D$ such that their containment set is the same or a superset of $CS_D(e)$.}
\end{definition}

By definition, if an entity satisfies $k$-anonymity by containment, it satisfies
the same for all integer values from 1 upto $k$. We use the term $AC(e)$ to
denote the largest $k$ for which the entity $e$ satisfies $k$-anonymity by containment. 

\begin{definition}[$k$-anonymous by Containment Group] {\em For a binary dataset
$D(E, I)$, if $e \in E $ satisfies the $k$-anonymity by containment,
$k$-anonymous by containment group with respect to $e$\ exists and this is $F
\cup \{e \}$, where $F$ is the largest possible set as is defined in
Definition~\ref{def:kca}.} \end{definition}

\begin{definition}[$k$-anonymous by Containment Dataset] {\em A binary dataset
$D(E, I)$ is $k$-anonymous by containment if every entity $e \in E$ satisfies
$k$-anonymity by containment.}
\end{definition}

We extend the term $AC$ over a dataset as well, thus $AC(D)$ is the smallest $k$ for which 
the dataset $D$ is anonymous.

\noindent {\bf Example:} For the dataset in Table~\ref{tab:toydata},
$CS(e_1)=\{x_1,x_3,x_5\}$.  Entity $e_1$ satisfies $4$-anonymity by containment,
because for each of the following three entities $e_2, e_4$, and $e_5$, their
containment sets are the same or supersets of $CS(e_1)$. But, the entity $e_6$ only
satisfies $1$-anonymity by containment, as besides itself no other entity
contains $CS(e_6)=\{x_1, x_2, x_4, x_5\}$. $4$-anonymous by containment group of
$e_2$ exists, and it is $\{e_1, e_2, e_4, e_5\}$, but $5$-anonymous by
selection group for the same entity does not exist. The dataset in
Table~\ref{tab:toydata} is $1$-anonymous by containment because there exists one
entity, namely $e_6$ such that the highest $k$-value for which $e_6$ satisfies
$k$-anonymity by containment is 1; alternatively $AC(D)=1$ \qedsymbol

$k$-anonymity by containment ($k$-AC) is the privacy metric that we use in this work. The
argument for this metric is that if a large number of other entities contain
the same or super feature subset which an entity $e$ contains, the disclosure protection
of the entity $e$ is strong, and vice versa. Thus a higher value of $k$ stands
for a higher level of privacy for $e$. $k$-anonymity by containment ($k$-$AC$) is similar to
$k$-anonymity for binary feature set except that for $k$-$AC$ only the `1'
value of feature set is considered as a privacy risk. It is easy 
to see that $k$-anonymity by containment ($k$-AC) is a relaxation of $k$-anonymity. 
In fact, the following lemma holds.

\begin{lemma}
If a dataset satisfies $k$-anonymity for 
a $k$ value, it also satisfies $k$-AC for the same $k$-value, but the reverse does 
not hold necessarily.
\end{lemma}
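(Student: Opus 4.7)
The plan is to split the claim into its two halves: the forward implication from $k$-anonymity to $k$-AC, which I expect to drop out of the definitions almost immediately, and the ``reverse does not hold'' part, which I would handle by exhibiting a small explicit counterexample. Both halves can be handled independently, and neither requires any substantial machinery beyond unpacking Definition~\ref{def:kca} and the definition of $k$-anonymity.

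For the forward direction, I would fix an arbitrary entity $e \in E$ in a $k$-anonymous dataset $D$. By the definition of $k$-anonymity (treating the full attribute set as the quasi-identifier, as is the setting of this paper), there exist at least $k-1$ other entities $f_1, \ldots, f_{k-1}$ whose feature vectors agree with that of $e$ in every coordinate. Identical feature vectors imply identical containment sets, so $CS_D(f_i) = CS_D(e)$ for each $i$, which in particular satisfies $CS_D(f_i) \supseteq CS_D(e)$. Thus the witness set $F = \{f_1, \ldots, f_{k-1}\}$ meets the requirements of Definition~\ref{def:kca} for $e$, and since $e$ was arbitrary, $D$ is $k$-AC.

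For the reverse non-implication, I would construct a tiny dataset over features $\{x_1, x_2\}$ with three entities $e_1 = (1,0)$, $e_2 = (1,1)$, and $e_3 = (1,1)$. Then $CS(e_1) = \{x_1\}$ and $CS(e_2) = CS(e_3) = \{x_1, x_2\}$. Entity $e_1$ has two other entities whose containment sets are strict supersets of $CS(e_1)$, while $e_2$ and $e_3$ witness 2-AC for each other via equality of containment sets. Hence every entity satisfies 2-AC and so $D$ is 2-AC. On the other hand, $e_1$'s feature vector $(1,0)$ is unique in $D$, so no other entity matches $e_1$ on every attribute, meaning $D$ is not 2-anonymous.

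I do not expect a significant obstacle here: the lemma is essentially a sanity check that the new $k$-AC metric is a relaxation of $k$-anonymity. The only conceptual point worth emphasizing is the asymmetry in Definition~\ref{def:kca}, where only the ``1'' entries of the containment set must be preserved upward by the witnesses, while ``0'' entries are free to flip to ``1''; this asymmetry is precisely what the counterexample exploits and what makes the forward direction trivially go through.
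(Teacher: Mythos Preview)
Your proposal is correct and matches the paper's own proof essentially line for line: the forward direction unpacks $k$-anonymity to get $k-1$ identical rows, observes their containment sets are equal (hence supersets), and concludes $k$-AC; the reverse is handled by the same three-entity, two-feature counterexample $\{(1,0),(1,1),(1,1)\}$ you give, where $(1,0)$ is unique but every entity still satisfies $2$-AC.
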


\begin{proof}
Say, the dataset $D$ satisfies $k$-anonymity; then for any row entity $e \in D$, there
exists at least $k-1$ other row entities with identical row vector as $e$. Containment set of
all these $k-1$ entities is identical to $e$, so $e$ satisfies $k$-AC. Since this holds
for all $e \in D$, the dataset $D$ satisfies $k$-AC. 

To prove that the reverse does not hold, we will give a counter-example. Assume $D$ has
three entities and two features with the following feature values, $D = \{(1, 1), (1, 0), (1, 1)\}$. 
$D$ satisfies $2$-AC because the smallest anonymity by containment value of the 
entities in the dataset is 2. But, the dataset does not satisfy 2-anonymity
because the entity $(1,0)$ is unique in the dataset. 
\end{proof}

However, the relaxed privacy that $k$-AC provides is adequate
for disclosure protection in a high dimensional sparse microdata with binary
attributes, because $k$-AC conceals
the list of the attributes in a containment set of an entity, which could reveal
sensitive information about the entity. For example, if the dataset is about
the search keywords that a set of users have used over a given time, for a
person having a 1 value under a keyword potentially reveals sensitive information about
the behavior or preference of that person. Having a value of 0 for a collection of features
merely reveals the knowledge that the entity is {\em not associated}
with that attribute. In the online microdata domain, due to the high dimensionality
of the data, non-association with a set of attributes is not a potential privacy risk.
Also note that, in traditional datasets, only a few attributes
which belong to non-sensitive group are assumed to be quasi-identifier, so a
privacy metric, like $k$-anonymity works well for such dataset. But, for high-dimensional
dataset, $k$-anonymity is severely restrictive and utility loss of data by column
suppression is substantial because feature subsets containing very small number of features
pass $k$-anonymity criteria. On the other hand, $k$-AC based privacy metric enables
selection of sufficient number of features for retaining the classification utility of
the dataset. In short, $k$-AC retains the classification utility substantially, whereas 
$k$-anonymity fails to do so for most high dimensional data.

Feature selection~\cite{Guyon.Elisseeff:03} for a classification task is to select a subset of highly
predictive variables so that classification accuracy possibly improves which
happens due to the fact that contradictory or noisy attributes are generally
ignored during the feature selection step. For a dataset $D(E, I)$, and a
feature-set $S \subseteq I$, following relational algebra notations, we use
$\Pi_S(D)$ to denote the projection of database $D$ over the feature set $S$.
Now, given a user-defined integer number $k$, our goal is to perform an optimal
feature selection on the dataset $D$ to obtain $\Pi_S(D)$ which satisfies two
objectives: first, $\Pi_S(D)$ is $k$-anonymous by containment, i.e.,
$AC(\Pi_S(D))\ge k$; second, $\Pi_S(D)$ maintains the predictive performance of
the classification task as much as possible. Selecting a subset of features is
similar to the task of column suppression based privacy protection, but the challenge
in our task is that we want to suppress column
that are risk to privacy, and at the same time we want to retain columns that have 
good predictive performance for a downstream supervised classification task using the sanitized dataset.
For denoting the predictive
performance of a dataset (or a projected dataset) we define a classification
utility function $f$. The higher the value of $f$, the better the dataset for the
classification. We consider $f$ to be a filter based feature selection criteria
which is independent of the classification model that we use.

The formal research task of this work is as below. Given a binary dataset $D (E, I)$, and 
an integer number $k$, find $S \subseteq I$ so that $f(\Pi_S(D))$ is maximized
under the constraint that $AC(\Pi_S(D)) \ge k$. Mathematically,

\begin{equation}\label{eq:opt}
\begin{aligned}
& \underset{S \subseteq I}{\text{maximize}}
& & f(\Pi_S(D))\\
& \text{subject to}
& & AC(\Pi_S(D)) \ge k\\
\end{aligned}
\end{equation}

Due to the fact that the problem~\ref{eq:opt} is a combinatorial optimization problem (optimizing over
the space of feature subsets) which is {\cal NP}-Hard, here 
we propose two effective local optimal solutions for this problem.

\section{Methods}

In this section, we describe two algorithms, namely \algf\ and \algs\ that
we propose for the task of feature selection under privacy constraint. \algf\
is a maximal itemset mining based feature selection method, and \algs\ is a greedy
method with privacy constraint based filtering. 
In the following subsections, we discuss them in details.

\subsection{Maximal Itemset based Approach}\label{sec:1}

A key observation regarding $k$-anonymity by containment ($AC$) of a dataset is
that this criteria satisfies the downward-closure property under feature
selection. The following lemma holds:

\begin{lemma} \label{lemma:dc}
Say $D(E,I)$ is a binary dataset and $X \subseteq I$ and $Y \subseteq I$ are two feature subsets. 
If $X \subseteq Y$, then $AC(\Pi_X(D)) \ge AC(\Pi_Y(D))$.

{\noindent}{\sc Proof}: Let's prove by contradiction. Suppose $X \subseteq Y$
and $AC(\Pi_X(D)) < AC(\Pi_Y(D))$.  Then from the definition of $AC$, there
exists at least one entity $e \in E$ for which $AC(\Pi_X(e)) < AC(\Pi_Y(e))$.
Now, let's assume $A_X$ and $A_Y$ are the set of entities which make the
anonymous by containment group for the entity $e$ in $\Pi_X(D)$ and $\Pi_Y(D)$,
respectively.  Since $AC(\Pi_X(e)) < AC(\Pi_Y(e)), |A_X| < |A_Y|$; so there
exists an entity $p \in A_Y \setminus A_X$, for which 
$CS_{\Pi_Y(D)}(p) \supseteq CS_{\Pi_Y(D)}(e)$ and $CS_{\Pi_X(D)}(p) \not\supseteq CS_{\Pi_X(D)}(e)$; 
But this is impossible, because $X \subseteq Y$, if $CS_{\Pi_Y(D)}(p) \supseteq CS_{\Pi_Y(D)}(e)$ holds,
then $CS_{\Pi_X(D)}(p) \supseteq CS_{\Pi_X{D}}(e)$ must be true.   
Thus, the lemma is proved by contradiction. \qed.  
\end{lemma}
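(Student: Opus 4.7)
I would prove this directly (as opposed to by contradiction) by exhibiting an explicit witness set. The plan is to show entity-by-entity that $AC(\Pi_X(e)) \ge AC(\Pi_Y(e))$ for every $e \in E$, and then take the minimum over $e$ to obtain the dataset-level claim.

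The key preliminary observation is that projection and containment commute nicely: for any entity $z \in E$ and any feature subset $Z \subseteq I$, one has $CS_{\Pi_Z(D)}(z) = CS_D(z) \cap Z$, since projecting to $Z$ simply drops the feature values outside $Z$ without changing which of the remaining ones are $1$. In particular, because $X \subseteq Y$, it follows that $CS_{\Pi_X(D)}(z) = CS_{\Pi_Y(D)}(z) \cap X$.

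Now fix $e \in E$ and let $k' = AC(\Pi_Y(e))$. By definition there exists a set $F \subseteq E \setminus \{e\}$ with $|F| \ge k' - 1$ such that for every $f \in F$, $CS_{\Pi_Y(D)}(f) \supseteq CS_{\Pi_Y(D)}(e)$. Intersecting both sides of this containment with $X$ and using the commuting identity above gives $CS_{\Pi_X(D)}(f) \supseteq CS_{\Pi_X(D)}(e)$ for every $f \in F$. Hence the very same $F$ witnesses that $e$ is $k'$-anonymous by containment in $\Pi_X(D)$, so $AC(\Pi_X(e)) \ge k' = AC(\Pi_Y(e))$. Taking the minimum over all $e \in E$ on each side yields $AC(\Pi_X(D)) \ge AC(\Pi_Y(D))$.

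The only real step that requires care is the commuting identity for $CS$ under projection, and specifically the fact that intersecting a containment of two sets with a third set preserves the containment — both are purely set-theoretic and hold without any structural assumption on $D$. There is no genuine obstacle here; the lemma is essentially a monotonicity statement, and the direct approach avoids the slight awkwardness of the contradiction argument (which has to postulate an element $p \in A_Y \setminus A_X$ and then derive the same containment-under-intersection fact anyway).
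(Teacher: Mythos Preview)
Your proof is correct. The paper argues by contradiction: it assumes $AC(\Pi_X(D)) < AC(\Pi_Y(D))$, extracts an entity $e$ and a witness $p \in A_Y \setminus A_X$, and then observes that $X \subseteq Y$ forces the implication $CS_{\Pi_Y(D)}(p) \supseteq CS_{\Pi_Y(D)}(e) \Rightarrow CS_{\Pi_X(D)}(p) \supseteq CS_{\Pi_X(D)}(e)$, contradicting $p \notin A_X$. Your direct argument isolates precisely this implication and uses it constructively: the witness set $F$ that certifies $k'$-anonymity by containment for $e$ in $\Pi_Y(D)$ is shown to certify it verbatim in $\Pi_X(D)$. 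The two proofs therefore rest on the identical set-theoretic fact; the difference is purely structural. Your version is marginally cleaner because you state the commuting identity $CS_{\Pi_Z(D)}(z) = CS_D(z) \cap Z$ explicitly and then just intersect both sides of a containment with $X$, whereas the paper invokes the needed implication in its final sentence without unpacking it. The contradiction wrapper in the paper buys no additional leverage, so your direct route is preferable for exposition, though neither argument is more general than the other.
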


Let's call the collection of feature subsets 
which satisfy the $AC$ threshold for a given $k$, the feasible set and represent
it with ${\cal F}_k$. Thus,
${\cal F}_k = \{X \mid X \subseteq I \wedge AC(\Pi_X(D)) \ge k \}$. A subset of 
features $X \in {\cal F}_k$ is called maximal if it has no supersets which is feasible.
Let ${\cal M}_k$ be the set of all maximal subset of features. Then
${\cal M}_k = \{X \mid X \in {\cal F}_k \wedge \not\exists Y \supset X, \text{such that} 
      \enspace Y \in {\cal F}_k\}$.
As we can observe given an integer $k$, if there exists a maximal feature set
$Z$ that satisfies the $AC$ constraint, then any feature set $X \subseteq Z$,
also satisfies the same $AC$ constraint, i.e., $k \le AC(\Pi_Z(D)) \le AC(\Pi_X(D))$
if $X \subseteq Z \in {\cal M}_k$ based on the Lemma~\ref{lemma:dc}.

\noindent {\bf Example}: For the dataset in Table~\ref{tab:toydata},
the $2$-anonymous by containment feasible feature set~\footnote{To enhance the readability, we write the
feature set as string; for example, the set $\{x_1, x_2\}$ is written as $x_1x_2$.} 
 ${\cal F}_2= \{x_1, x_2, x_3,
x_4, x_5, x_1x_2,x_1x_3,x_1x_4, x_1x_5, x_2x_5, x_3x_5, x_4x_5, x_1x_2x_5,x_1x_3x_5,x_1x_4x_5\}$ 
and ${\cal M}_2 = \{x_1x_2x_5, x_1x_3x_5, x_1x_4x_5\}$.  
In this dataset, the feature-set $x_2x_3 \notin {\cal F}_2$ because
in $\Pi_{x_2x_3}(D)$, $CS(e_5) = \{x_2, x_3\}$ and the size of the $k$-anonymous
by containment group of $e_5$ is 1; thus $AC(\Pi_{x_2x_3}(D)) = 1 < 2$. On the
other hand for feature-set $x_1x_2x_5$, the projected dataset $\Pi_{x_1x_2x_5}(D)$
has two $k$-anonymous by containment groups, which are $\{e_1,e_2,e_3,e_4,e_5, e_6\}$ and
$\{e_5, e_6\}$; since each group contains at least two entities, \\ $AC(\Pi_{x_1x_2x_5}(D)) =2$ $\qed$

\begin{lemma}\label{lemma:itemsetmine}
Say, $D(E,I)$ is a binary dataset, and $T$ is its transaction representation where
each entity $e \in E$ is a transaction consisting of the containment set $CS_D(e)$.
Frequent itemset of the dataset $T$ with minimum support threshold $k$ are the
feasible feature set ${\cal F}_k$ for the optimization problem~\ref{eq:opt}.

{\noindent}{\sc Proof}: Say, $X$ is a frequent itemset in the transaction $T$
for support threshold $k$. Then, the support-set of $X$ in $T$ are the
transactions (or entities) which contain $X$. Since, $X$ is frequent, the
support-set of $X$ consists of at least $k$ entities. In the projected
dataset $\Pi_X(D)$, all these entities make a $k$-anonymous by containment
group, thus satisfying $k$-anonymity by containment. For each of the remaining
entities (say, $e$), $e$'s containment set contains some subset of $X$ (say $Y$) 
in $\Pi_X(D)$.  Since, $X$ is a frequent itemset and $Y \subset X$, $Y$ is also 
frequent with a support-set that has at least $k$ entities. Then $e$ also belongs
to a $k$-anonymous by containment group. Thus, each of the
entities in $D$ belongs to some $k$-anonymous by containment group(s) which yields:
$X~\text{is frequent} \Rightarrow X \in {\cal F}_k$. Hence proved. \qed

\end{lemma}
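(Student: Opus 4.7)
The plan is to show directly that any itemset $X$ with support at least $k$ in $T$ satisfies $AC(\Pi_X(D)) \ge k$. I would fix such an $X$, let $S_X = \{e \in E : CS_D(e) \supseteq X\}$ be its support-set (so $|S_X| \ge k$), and verify the $k$-AC property for every entity $e \in E$ in the projected dataset $\Pi_X(D)$. The key observation I would invoke repeatedly is that for any entity $e$, its containment set in the projection satisfies $CS_{\Pi_X(D)}(e) = CS_D(e) \cap X$.

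I would then split into two cases. In Case~A, $e \in S_X$, so $CS_{\Pi_X(D)}(e) = X$. Every other $f \in S_X \setminus \{e\}$ also has $CS_{\Pi_X(D)}(f) = X \supseteq CS_{\Pi_X(D)}(e)$, and since $|S_X| \ge k$, the set $S_X \setminus \{e\}$ supplies the required $k-1$ witnesses. In Case~B, $e \notin S_X$, and letting $Y = CS_D(e) \cap X = CS_{\Pi_X(D)}(e)$, we have $Y \subsetneq X$. Here I would appeal to the standard downward-closure (anti-monotonicity) property of itemset support: since $Y \subseteq X$, $\mathrm{supp}_T(Y) \ge \mathrm{supp}_T(X) \ge k$. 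The support-set $S_Y$ of $Y$ consists of entities $f$ with $CS_D(f) \supseteq Y$, and for each such $f$ we get $CS_{\Pi_X(D)}(f) = CS_D(f) \cap X \supseteq Y = CS_{\Pi_X(D)}(e)$. Crucially, $e$ itself lies in $S_Y$ (because $Y \subseteq CS_D(e)$), so $S_Y \setminus \{e\}$ has at least $k-1$ entities, each a valid witness for $e$'s $k$-AC.

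Combining the two cases, every $e \in E$ satisfies $k$-anonymity by containment in $\Pi_X(D)$, so by the definition of $AC$ on a dataset, $AC(\Pi_X(D)) \ge k$ and hence $X \in \mathcal{F}_k$.

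The argument is essentially bookkeeping, and the only subtlety I anticipate is Case~B: one must be careful not to accidentally exclude $e$ from being its own member of $S_Y$, because the witness count is $|S_Y| - 1$ rather than $|S_Y|$. Making that explicit — noting that $Y$ was \emph{defined} as $CS_D(e)\cap X$, so $e$ automatically contains $Y$ in $D$ — is the cleanest way to secure the $k-1$ bound without a separate sub-argument. I would not attempt to prove a converse (feasible $\Rightarrow$ frequent), since the lemma as used in the algorithm only needs the stated direction, and in fact the converse can fail when no entity in $D$ contains all of $X$ yet every projected entity still finds enough companions via a smaller containment set.
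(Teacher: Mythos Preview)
Your proposal is correct and follows essentially the same two-case argument as the paper: entities in the support-set of $X$ witness each other, while every other entity has projected containment set $Y \subsetneq X$ which is itself frequent by downward closure, yielding enough witnesses. Your version is slightly more careful in making the projection identity $CS_{\Pi_X(D)}(e)=CS_D(e)\cap X$ explicit and in tracking that $e\in S_Y$ so that $|S_Y\setminus\{e\}|\ge k-1$; your observation that the converse can fail is also a nice addition the paper omits.
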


A consequence of Lemma~\ref{lemma:dc} is that for a given dataset $D$, an integer $k$, and
a feature set $S$, if $AC(\Pi_S(D))\ge k$, any subset of $S$ (say, $R$)
satisfies $AC(\Pi_R(D))\ge k$. This is identical to the downward closure
property of frequent itemset mining. Also, Lemma~\ref{lemma:itemsetmine}
confirms that any itemset that is frequent in the transaction representation of
$D$ for a minimum support threshold $k$ is a feasible solution
for problem~\ref{eq:opt}.  Hence, Apriori like algorithm for itemset
mining can be
used for effectively enumerating all the feature subsets of $D$ which satisfies
the required $k$-anonymity by containment constraint.

\subsubsection{Maximal Feasible Feature Set Generation}

For large datasets, the feasible feature set ${\cal F}_k$ which consists of
feasible solutions for the optimization problem~\ref{eq:opt} can be very large.
One way to control its size is by choosing appropriate $k$; if $k$ increases,
$|{\cal F}_k|$ decreases, and vice-versa, but choosing a large $k$ negatively
impacts the classification utility of the dataset, thus reducing the optimal
value of problem (\ref{eq:opt}). A brute force method for finding the optimal
feature set $S$ is to enumerate all the feature subset in ${\cal F}$ and find
the one that is the best given the utility criteria $f$.  However, this can be
very slow. So, \algf\ generates all possible maximal feature sets ${\cal M}_k$
instead of generating ${\cal F}_k$ and search for the best feature subset
within ${\cal M}_k$. The idea of enumerating ${\cal M}_k$ instead of ${\cal
F}_k$ comes from the assumption that with more features the classification
performance will increase; thus, the
size of the feature set is its utility function value; i.e., $f(\Pi_S(D)) =
|S|$, and in that case the largest set in ${\cal M}_k$ is the solution to the
problem~\ref{eq:opt}.  

An obvious advantage of working only
with the maximal feature set is that for many datasets, $|{\cal M}_k| << |{\cal
F}_k|$, thus finding solution within ${\cal M}_k$ instead of ${\cal F}_k$ leads
to significant savings in computation time.  
Just like the case for frequent itemset mining, maximal frequent itemset mining
algorithm can also be used for finding ${\cal M}_k$. Any off-the-shelf software
can be used for this. In \algf algorithm we use the LCM-Miner package provided
in\footnote{\url{http://research.nii.ac.jp/~uno/code/lcm.html}} which, at present, 
is the fastest method for finding maximal frequent itemsets. 

\subsubsection{Classification Utility Function}

The simple utility function $f(\Pi_S(D)) = |S|$ has a few limitations. First,
the ties are very commonplace, as there are many maximal feature sets that have
the same size. Second, and more importantly, this function does not take into
account the class labels of the instances so it cannot find a feature set that
maximizes the separation between the positive and negative instances. So, we
consider another utility function, named as $HamDist$, which does not succumb much to
the tie situation. It also considers the class label for choosing features
that provide good separation between the positive and negative classes. 

\begin{definition}[Hamming Distance]
{\em For a given binary database
$D(E,I)$, and a subset of features, $S \subseteq I$, the Hamming distance
between $\Pi_S(a)$ and $\Pi_S(b)$ is defined as below:}
\begin{equation}
d_{H}(\Pi_S(a), \Pi_S(b))
=  \displaystyle \sum_{j=1}^{\mid S \mid} \mathbbm{1}\{a_{s_j} \neq b_{s_j}, \forall s_{j} \in S\} 
\label{eq:hamming}
\end{equation}
{\em where $\mathbbm{1}\{X\}$ is the indicator function, and $a_{s_j}$ and $b_{s_j}$
are the $s_j$th feature value under $S$ for the entities $a$ and $b$, respectively.}
\end{definition}

We can partition the entities in $D(E, I)$ into two disjoint subsets, $E_1$ and $E_2$;
entities in $E_1$ have a class label value of $C_1$, and entities in $E_2$ have a class 
label value of $C_2$.

\begin{definition}[{\em HamDist}]~\label{def:hamdist} {\em Given a dataset $D(E=E_1 \cup E_2,I)$ where the partitions
$E_1$ and $E_2$ are based on class labels,
the classification utility function $HamDist$ for a feature 
subset $S \subseteq I$ is the
average Hamming distance between all pair of entities $a$ and $b$ such that $a \in E_1$
and $b \in E_2$.}
\begin{equation}
HamDist(S) = \frac{1}{\mid E_{1} \mid \mid E_{2} \mid} \displaystyle \sum_{a \in E_{1}, b \in E_{2}} 
d_H(\Pi_S(a),\Pi_S(b))
\label{eq:averagehamming}
\end{equation}
\end{definition}

\noindent{\bf Example:} For the dataset in Table~\ref{tab:toydata}, for its projection
on $x_1x_2x_5$ (see, Table 2), distance of $e_1$ from the negative entities are $0+1+1=2$,
and the same for the other positive entities, $e_3$ and $e_4$ also. So, 
$HamDist(x_1x_2x_5) = ((0+1+1)+(0+1+1)+(0+1+1))/9=6/9$. From the same table we can also
see that $HamDist(x_3x_4x_5)=8/9$.$\qed$

As we can observe from Equations~\ref{eq:hamming} and~\ref{eq:averagehamming},
the utility function $HamDist(S)$ reflects the discriminative power between
classes given the feature set $S$. The larger the value of $HamDist(S)$, the
better the quality of selected feature set $S$ to distinguish between classes.
Another separation metric similar to $HamDist$ is {\em DistCnt} (Distinguish Count), 
which is defined below.

\begin{definition}[{\em DistCnt}]~\label{def:distcnt} {\em For $D(E_1 \cup E_2, I)$, and $S \subseteq I$,
$DistCnt$ is the number of 
pairs from $E_1$ and $E_2$ which can be distinguished using at least one feature in $S$. Mathematically,}
\begin{equation}
DistCnt(S) = \frac{1}{\mid E_{1} \mid \mid E_{2} \mid} \displaystyle \sum_{a \in E_{1}, b \in E_{2}} 
{\mathbbm{1}\{\Pi_S(a) \neq \Pi_S(b)\}}
\label{eq:distcnt}
\end{equation}
\end{definition}
$DistCnt$ can also be used instead of $HamDist$ in the \algf\ algorithm. 
Note that, we can also use $CM$ criterion (see Definition~\ref{def:cm}) 
instead of $HamDist$; however, experimental results
show that $CM$ performs much poorer in terms of AUC.
Besides, both $HamDist$ and $DistCnt$ functions have some good properties (will be discussed in
Section ~\ref{sec:2}) which $CM$ does not have.

The \algf\ algorithm utilizes classification utility metrics ($HamDist$ or
$DistCnt$) for selecting the best feature set from the maximal set ${\cal
M}_k$. For some datasets, the size of ${\cal M}_k$ can be large and selecting
the best feature set by applying the utility metric on each element of ${\cal
M}_k$ can be time-consuming. Then, we can find the best feature set among the
largest sized element in ${\cal M}_k$. Another option is to consider the
maximal feature sets in ${\cal M}_k$ in the decreasing order of their size in
such a way that at most $r$ of the maximal feature sets from set ${\cal M}_k$
are chosen as candidate for which the utility metric computation is performed.
In this work we use this second option by setting $r=20$ for all our
experiments. 

\begin{algorithm}
\renewcommand{\algorithmicrequire}{\textbf{Input:}}
\renewcommand{\algorithmicensure}{\textbf{Output:}}
\caption{Maximal Itemset Mining Based Feature Selection Method}
\label{alg1}
\begin{algorithmic}[1]
\REQUIRE $D(E,I)$, $k$, $r$
\ENSURE $S$
\STATE Calculate maximal feature set ${\cal M}_k$ which contains the feature-sets
satisfying $k$-AC for the given $k$
\STATE Select best feature-set $S$ based on the $HamDist$ criteria by considering
$r$ largest-sized feature set in ${\cal M}_k$.\\
\STATE \textbf{return} $S$
\end{algorithmic}
\end{algorithm}  

\subsubsection{Maximal Itemset based Method (Pseudo-code)}

The pseudo-code for \algf\ is given in Algorithms~\ref{alg1}.
\algf\ takes integer number $k$ and the number of maximal patterns $r$ as
input and returns the final feature set $S$ which satisfies $k$-anonymity by containment.
Line 1 uses the LCM-Miner to generate all the maximal
feature sets that satisfy $k$-anonymity by containment for the given $k$ value. Line 2 groups maximal
feasible feature sets according to its size and selects top $r$ maximal feature sets
with the largest size and builds the candidate feature sets. Then the algorithm computes
the feature selection criteria $HamDist$ of each feature set in the candidate feature sets and returns the
best feature set that has the maximum value for this criteria.

The complexity of the above algorithm predominantly depends on the complexity
of the maximal itemset mining step (Line 1), which depends on the input value $k$.  
For larger $k$, the privacy is stronger and it reduces
${\cal M}_k$ making the algorithm run faster, but the classification utility
of the dataset may suffer. On the other hand, for smaller $k$, ${\cal M}_k$ can
be large making the algorithm slower, but it better retains the classification
utility of the dataset.

\subsection{Greedy with Modular and Sub-Modular Objective Functions}\label{sec:2}

A potential limitation of \algf\ is that for dense datasets this method can be
slow. So, we propose a second method, called \algs\ which runs much faster as
it greedily adds a new feature to an existing feasible feature-set. For greedy
criteria, \algs\ can use different separation functions which discriminate
between positive and negative instances. In this work we use $HamDist$ (See
Definition~\ref{def:hamdist}) and $DistCnt$ (See Definition~\ref{def:distcnt}). 
Thus \algs\ solves the
Problem (\ref{eq:opt}) by replacing $f$ by either of the two functions. Because
of the monotone property of these functions, \algs\ ensures that as we add more
features, the objective function value of (\ref{eq:opt}) monotonically
increases. The process stops once no more features are available to add to the
existing feature set while ensuring the desired $AC$ value of the projected
dataset. 

\subsubsection{Submodularity, and Modularity}
\begin{definition} [Submodular Set Function] {\em Given a finite ground set $U$, 
a monotone function $f$ that maps subsets of $U$ to a real number $f:2^{U} \rightarrow \mathbb{R}$ 
is called submodular if}
\begin{equation*}\label{eq:sm}
f(S \cup \{u\}) - f(S) \ge f(T \cup \{u\}) - f(T), \forall S \subseteq T \subseteq U, u \in U
\end{equation*}
{\em If the above condition is satisfied with equality, the function is called modular.}
\end{definition}

\begin{theorem}
$HamDist$ is monotone, submodular, and modular.
\end{theorem}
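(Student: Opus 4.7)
The plan is to exploit the fact that the Hamming distance decomposes as a sum over individual features, which makes $HamDist$ a separable (additive) function over the feature set. Specifically, for any two entities $a, b$ and any feature subset $S$, we have
\begin{equation*}
d_H(\Pi_S(a), \Pi_S(b)) = \sum_{s \in S} \mathbbm{1}\{a_s \neq b_s\},
\end{equation*}
so swapping the order of summation in the definition of $HamDist$ gives
\begin{equation*}
HamDist(S) = \sum_{s \in S} g(s), \qquad g(s) = \frac{1}{|E_1||E_2|} \sum_{a \in E_1, b \in E_2} \mathbbm{1}\{a_s \neq b_s\}.
\end{equation*}
Thus $HamDist$ is a nonnegative linear combination of indicator functions, one per feature.

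From this decomposition all three properties follow almost immediately. First I would note that $g(s) \ge 0$ for every feature $s$, because it is an average of indicator values. Monotonicity then follows since for $S \subseteq T \subseteq I$, $HamDist(T) - HamDist(S) = \sum_{s \in T \setminus S} g(s) \ge 0$. For modularity, I would compute the marginal gain: for any $S \subseteq I$ and $u \notin S$,
\begin{equation*}
HamDist(S \cup \{u\}) - HamDist(S) = g(u),
\end{equation*}
which depends only on $u$ and not on $S$. Hence for any $S \subseteq T \subseteq I$ and $u \in I$, the marginal gain is equal on both sides, so the submodular inequality holds with equality, i.e., $HamDist$ is modular. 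Since modularity is the equality case of submodularity, $HamDist$ is in particular submodular as well.

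There is essentially no obstacle here; the entire argument is a one-line interchange of summations. The only small subtlety is handling the case $u \in S$ in the marginal-gain computation, which is resolved by taking $u \notin S$ without loss of generality (adding an already-present feature leaves both sides unchanged), and observing that the submodular inequality is vacuous/trivial when $u \in T$. I would present the argument by first stating the decomposition lemma, then deriving monotonicity, modularity, and submodularity as immediate corollaries in that order.
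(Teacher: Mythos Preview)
Your proposal is correct and follows essentially the same approach as the paper: both arguments rest on the observation that $d_H(\Pi_S(a),\Pi_S(b))$ decomposes as a sum over individual features, so $HamDist$ is additive in $S$. The paper phrases this as $w(S\cup\{u\}) = w(S) + w(\{u\})$ (where $w$ is the unnormalized sum), which is exactly your decomposition $HamDist(S)=\sum_{s\in S} g(s)$; your presentation is slightly more streamlined by stating the per-feature decomposition upfront, but the mathematical content is the same.
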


\begin{proof} For a dataset $D(E,I)$, $S \subseteq I$, and $T \subseteq I$ are two arbitrary
feature-sets, such that $S \subseteq T$.  $E=E_1 \cup E_2$, where the partition
is based on class label. Consider the pair $(a,b)$, such that $a \in E_1$ and
$b \in E_2$. Let, $w(\cdot)$ be a function that sums the Hamming distance over
all such pairs $(a,b)$ for a given feature subset $S$. Thus, $w(S) = \sum_{a
\in E_1, b \in E_2}{d_{H}(\Pi_S(a), \Pi_S(b))}$, where the function $d_H$ is
the Hamming distance between $a$ and $b$ as defined in
Equation~\ref{eq:hamming}.  Similarly we can define $w(T)$, for the
feature subset $T$. Using Equation~\ref{eq:hamming}, $d_{H}(\Pi_S(a),
\Pi_S(b))$ is the summation over each of the features in $S$. Since $S
\subseteq T$,  $d_{H}(\Pi_T(a), \Pi_T(b))$ includes the sum values for the
variables in $S$ and possibly includes the sum value of other variables, which
is non-negative. Summing over all ($a, b$) pairs yields $w(S)\le w(T)$. So,
$HamDist$ is monotone. Now, for a feature $u \notin T$,

\begin{align*}
w(S \cup \{u\}) & = \sum_{a \in E_1, b \in E_2}{d_{H}\left(\Pi_{S\cup \{u\}}(a), 
                    \Pi_{S\cup \{u\}}(b)\right)}\\
                & = \sum_{a \in E_1, b \in E_2}{\sum_{s_j \in S \cup \{u\}}
                    {\mathbbm{1}\{a_{s_j} \neq b_{s_j}\}}} \text{(using Eq.~\ref{eq:hamming})}\\
                & = \sum_{a \in E_1, b \in E_2}{\left(\sum_{s_j \in S }
                    {\mathbbm{1}\{a_{s_j} \neq b_{s_j}\}} + 
                    {\mathbbm{1}\{a_{u} \neq b_{u}\}}\right)}\\
                & = w(S) + w(\{u\})
\end{align*}
Similarly, $w(T \cup \{u\}) = w(T)+w(\{u\})$. Then, we have
$w(\{u\}) = w(S \cup \{u\}) - w(S) = w(T \cup \{u\}) - w(T)$. Dividing both sides by
$1/(|E_1|\cdot |E_2|)$ yields $HamDist(S \cup \{u\}) - HamDist(S) = HamDist(T \cup\{u\}) - HamDist(T)$. Hence proved with the equality.
\end{proof}

\begin{theorem}$DistCnt$ is monotone, and submodular. \end{theorem}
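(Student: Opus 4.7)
My plan is to prove both properties by working directly with the indicator characterization
of $DistCnt$. Let $P = E_1 \times E_2$ and for a feature-set $S \subseteq I$ define
$\mathrm{Dist}(S) = \{(a,b) \in P \mid \Pi_S(a) \ne \Pi_S(b)\}$. Then
$DistCnt(S) = |\mathrm{Dist}(S)| / (|E_1|\,|E_2|)$, so it suffices to establish the
two properties for the unnormalized set-valued quantity $\mathrm{Dist}(\cdot)$, since the
normalization is a positive constant.

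For monotonicity, I would observe that if $S \subseteq T$ and a pair $(a,b)$
differs in at least one coordinate indexed by $S$, it certainly differs in at
least one coordinate indexed by $T$. Hence $\mathrm{Dist}(S) \subseteq \mathrm{Dist}(T)$, which
yields $DistCnt(S) \le DistCnt(T)$.

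For submodularity, fix $S \subseteq T \subseteq I$ and $u \in I \setminus T$. I want to show
$DistCnt(S \cup \{u\}) - DistCnt(S) \ge DistCnt(T \cup \{u\}) - DistCnt(T)$.
The key step is to characterize the marginal gain: a pair $(a,b)$
contributes to $\mathrm{Dist}(S \cup \{u\}) \setminus \mathrm{Dist}(S)$ if and only if $\Pi_S(a) = \Pi_S(b)$
(they agree on every feature in $S$) and $a_u \ne b_u$. Writing
$A_S = \{(a,b) \in P \mid \Pi_S(a) = \Pi_S(b)\}$ and $B = \{(a,b) \in P \mid a_u \ne b_u\}$, the
marginal gain for $S$ is $|A_S \cap B|$ and for $T$ is $|A_T \cap B|$. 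Since
$S \subseteq T$, any pair agreeing on all features of $T$ automatically agrees on
all features of $S$, so $A_T \subseteq A_S$, hence $A_T \cap B \subseteq A_S \cap B$ and
$|A_T \cap B| \le |A_S \cap B|$. Dividing by $|E_1|\,|E_2|$ yields the claimed
submodularity inequality.

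There is no real obstacle; the only subtlety is to resist the temptation to
prove modularity (as was done for $HamDist$). Strict inequality can actually
occur for $DistCnt$: a pair already distinguished by $S$ contributes nothing to the
$S$-marginal but could have contributed to a $T$-marginal if $T$ had fewer
distinguishing features than $S$ on that pair — except that monotonicity
prevents this and instead causes exactly the opposite, namely that new
distinguishing contributions from $u$ are lost as $T$ already distinguishes them.
So a small two-row toy example would suffice to demonstrate that equality can
fail, confirming that $DistCnt$ is submodular but not modular.
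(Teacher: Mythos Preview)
Your proof is correct and is essentially the same argument as the paper's, just unpacked: the paper casts $DistCnt$ as the (normalized) size of the neighborhood $\Gamma(S)$ in a bipartite graph between features and cross-class pairs and then appeals to the known fact that $|\Gamma(\cdot)|$ is a monotone submodular coverage function, whereas you define the same set $\mathrm{Dist}(S)=\Gamma(S)$ and prove the coverage-function submodularity directly via the marginal-gain characterization $|A_S\cap B|$. The extra paragraph about non-modularity is fine as commentary but is not part of the theorem statement and is not in the paper's proof.
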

\begin{proof} Given a dataset $D(E,I)$ where $E$ is partitioned as $E_1 \cup E_2$ based on class
label. Now, consider a bipartite graph, where vertices in one partition (say, $V_1$)
correspond to features in $I$, and the vertices of other partition (say, $V_2$)
correspond to a distinct pair of entities $(a,b)$ such that $a \in E_1$,
and $b \in E_2$; thus, $|V_2|=|E_1| \cdot |E_2|$. If for a feature  $x
\in V_1$, we have $a_x \ne b_x$, an edge exists between the corresponding
vertices $x \in V_1$ and $(a,b) \in V_2$. Say, $S \subseteq V_1$ and  $T \subseteq V_1$
and $S \subseteq T$. For a set of vertices, $\Gamma(\cdot)$ represents their 
neighbor-list. Since, the size of neighbor-list of a vertex-set is monotone and submodular, 
for $u \notin T$, we have $|\Gamma(S)| \le |\Gamma(T)|$, and 
$|\Gamma(S \cup \{u\})| - |\Gamma(S)| \ge |\Gamma(T \cup \{u\})| - |\Gamma(T)|$. 
By construction, for a feature set, $S$, $\Gamma(S)$ contains the entity-pairs for which 
at least one feature-value out of $S$ is different. Thus, $DistCnt$ function is 
$\frac{|\Gamma(\cdot)|}{|V_2|}$ and it is submodular.
\end{proof}

\begin{algorithm}
\renewcommand{\algorithmicrequire}{\textbf{Input:}}
\renewcommand{\algorithmicensure}{\textbf{Output:}}
\caption{Greedy Algorithm for $HamDist$}
\label{alg:2}
\begin{algorithmic}[1]
\REQUIRE $D (E, I)$, $k$
\ENSURE $S$
\STATE Sort the features in non-increasing order based on their $hamDist$, denoted as $F_{sorted}$
\STATE $S = \emptyset $
\FOR {each feature $x \in F_{sorted}$}
 \IF {$AC(\Pi_{S \cup \{ x \}}(D)) \geq k$}
   \STATE $S = S \cup \{ x \}$	 
 \ELSE \STATE \textbf{break}
 \ENDIF
\ENDFOR 
\STATE \textbf{return} $S$
\end{algorithmic}
\end{algorithm}

\begin{theorem}\label{th:approx}
For monotone submodular function $f$, let $S$ be a set of size $k$ obtained by
selecting elements one at a time, each time choosing an element provides
the largest marginal increase in the function value. Let $S^{\ast}$ be a set
that maximizes the value of $f$ over all $k$-element sets. Then $f(S) \ge (1 -
\frac{1}{e}) f(S^{\ast})$; in other words, $S$ provides $(1 -
\frac{1}{e})$-approximation. For modular function $f(S)=f(S^{\ast})$~\cite{Conforti.Cornuejols:84}.  \end{theorem}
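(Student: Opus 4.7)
The plan is to follow the classical Nemhauser--Wolsey--Fisher argument, comparing the greedy solution to the optimum via a per-step recurrence. Let $S_0 = \emptyset, S_1, \ldots, S_k = S$ denote the successive sets produced by the greedy algorithm, and write $S^{\ast} = \{o_1, \ldots, o_k\}$ for an optimal $k$-element set. The key quantity to track will be the residual gap $\delta_i = f(S^{\ast}) - f(S_i)$, and the target inequality $\delta_{i+1} \le (1 - 1/k)\,\delta_i$ will be derived by combining monotonicity, submodularity, and the greedy selection rule.

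The first step is to use monotonicity to write $f(S^{\ast}) \le f(S_i \cup S^{\ast})$, and then telescope $f(S_i \cup S^{\ast}) - f(S_i)$ as a sum of $k$ marginal increments obtained by inserting $o_1, \ldots, o_k$ one at a time into $S_i$. Submodularity lets me upper bound each such increment by the corresponding singleton marginal gain $f(S_i \cup \{o_j\}) - f(S_i)$. Since the greedy choice maximizes the singleton marginal gain at step $i+1$, each of these $k$ terms is at most $f(S_{i+1}) - f(S_i)$. Combining, I obtain $f(S^{\ast}) - f(S_i) \le k\bigl(f(S_{i+1}) - f(S_i)\bigr)$, which rearranges to the recurrence $\delta_{i+1} \le (1 - 1/k)\,\delta_i$.

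Iterating the recurrence gives $\delta_k \le (1 - 1/k)^k \delta_0 \le e^{-1} f(S^{\ast})$, using the standard inequality $(1 - 1/k)^k \le 1/e$ together with $\delta_0 = f(S^{\ast}) - f(\emptyset) \le f(S^{\ast})$ (assuming $f(\emptyset) \ge 0$, which holds for $HamDist$ and $DistCnt$). Rewriting as $f(S) = f(S^{\ast}) - \delta_k \ge (1 - 1/e)\,f(S^{\ast})$ closes the submodular case. For the modular case, the submodularity inequality holds with equality, so the telescoping identity gives $f(S_i \cup S^{\ast}) - f(S_i) = \sum_j [f(S_i \cup \{o_j\}) - f(S_i)]$ exactly, and since marginal gains are context-independent, the greedy choice at each step picks the globally top remaining element; hence $S$ coincides in value with the top-$k$ set, yielding $f(S) = f(S^{\ast})$.

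The main obstacle I anticipate is the telescoping step, where care is needed to align the submodular inequality $f(A \cup \{x\}) - f(A) \ge f(B \cup \{x\}) - f(B)$ (for $A \subseteq B$) with the direction of comparison needed: I must bound the marginal gain of inserting $o_j$ into $S_i \cup \{o_1, \ldots, o_{j-1}\}$ from above by the marginal gain of inserting $o_j$ into $S_i$, which is exactly what submodularity provides when $S_i \subseteq S_i \cup \{o_1, \ldots, o_{j-1}\}$. Once this is written cleanly, the remaining algebra — the recurrence and the $(1 - 1/k)^k \le 1/e$ bound — is routine.
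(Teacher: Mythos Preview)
Your argument is the classical Nemhauser--Wolsey--Fisher proof and is correct. The paper itself does not give a proof of this theorem at all: it merely states the result and cites~\cite{Conforti.Cornuejols:84}, treating it as a known fact from the submodular optimization literature. So there is no comparison of approaches to be made beyond noting that you have supplied the standard proof the paper omits; your telescoping-plus-recurrence derivation and the modular refinement are exactly what underlies the cited result.
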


\subsubsection{Greedy Method (Pseudo-code)} 

Using the above theorems we can design two greedy algorithms, one for modular
function $HamDist$, and the other for submodular function $DistCnt$. The
pseudo-codes of these algorithms are shown in  Algorithm~\ref{alg:2} and
Algorithm~\ref{alg:3}. Both the methods take binary dataset $D$ and integer
value $k$ as input and generate the selected feature set $S$ as output.
Initially $S = \emptyset$. For modular function, the marginal gain of an added
feature can be pre-computed, so Algorithm~\ref{alg:2} first sorts the features
in non-increasing order of their $HamDist$ values, and greedily adds features
until it encounters a feature such that its addition does not satisfy the $AC$
constraint.  For submodular function $DistCnt$, margin gain cannot be
pre-computed, so Algorithm~\ref{alg:3} selects the new feature by iterating over
all the features and finding the best one (Line 5 -11).  The terminating
condition of this method is also identical to the Algorithm~\ref{alg:2}.  Since
the number of features is finite, both the methods always terminate with a
valid $S$ which satisfies $AC(\Pi_S(D))\ge k$. 

Compared to \algf, both greedy methods are faster. With respect to
number of features ($d$), Algorithm~\ref{alg:2} runs in $O(d \lg d)$ time and
Algorithm~\ref{alg:3} runs in $O(d^2)$ time. Also, using
Theorem~\ref{th:approx}, Algorithm 2 returns the optimal size $|S|$
feature-set, and Algorithm 3 returns $S$, for which the objective function
value is $(1-1/e)$ optimal over all possible size-$|S|$ feature sets.

% delete from Baichuan Feb 7th
\iffalse
Also, using Theorem~\ref{th:approx}, Algorithm 2 returns the optimal size $|S|$
feature-set, and Algorithm 3 returns $S$, for which the objective function
value is $(1-1/e)$ optimal over all possible size-$|S|$ feature sets. 
\fi

\begin{algorithm}
\renewcommand{\algorithmicrequire}{\textbf{Input:}}
\renewcommand{\algorithmicensure}{\textbf{Output:}}
\caption{Greedy Algorithm for $DistCnt$}
\label{alg:3}
\begin{algorithmic}[1]
\REQUIRE $D (E, I)$, $k$
\ENSURE $S$
\STATE $T = \emptyset $
\REPEAT
 \STATE $S=T$
 \STATE $\Delta H_{max} = 0.0$ 
 \FOR{$u \in I \setminus S$}
   \STATE Compute $\Delta H = DistCnt(S \cup \{u\}) - DistCnt(S)$
   \IF {$ \Delta H > \Delta H_{max} $}
     \STATE $\Delta H_{max} = \Delta H$
     \STATE $u_{m} = u$
   \ENDIF
 \ENDFOR
 \STATE $T = S \cup \{ u_{m} \} $
\UNTIL {$AC(\Pi_T(D)) \geq k$}
\STATE \textbf{return} $S$
\end{algorithmic}
\end{algorithm}

\begin{table}[t!]
\centering
\scalebox{1.00}{
\begin{tabular}{l c c c c c}
\toprule
%\hline
Dataset & \# Entities & \# Features & \# Pos & \# Neg & Density \\  \midrule
Adult Data & 32561 & 19 & 24720 & 7841 & 27.9\% \\
Entity & 148 & 552 & 74 & 74 &  9.7\% \\
Disambiguation &  &  & &  &   \\
%German & 1000 & 84 & 300 & 700 & 23.8\% \\ 
Email & 1099 & 24604 & 618 & 481 & 0.9\% \\ \bottomrule
\end{tabular}
}
\caption{Statistics of Real-World Datasets}
\label{tab:data}
\vspace{-0.1in}
\end{table}

% * <bz3@umail.iu.edu> 2017-02-03T14:39:26.275Z:
% 
% second reviewer mentioned that "Is it possible that you cannot find a subset k-anonymity (k>=K) over data
% for a specific K? If likely, how do you do next? If unlikely, justify it."
% 
% That's a fair question. If k is too large, there is high possibility that we are not able to select even a single feature ! However, for real-life scenario, as we demonstrated in experiment, choosing an appropriate privacy level (k) always gives you sufficient number of features to publish. 
% 
% I believe it is just for our information. We don't necessarily include this statement in the paper.
% 
% ^.

% * <bz3@umail.iu.edu> 2017-02-03T14:43:12.565Z:
% 
% second reviewer accepts our rebuttal for comparing the data utility under both k-anonymity and differential privacy. see his comments below:
% 
% Comparison between DP and k-anonymity.
% 
% You point out this kind of comparisons is a common practice in literature. Yes, you are right since there were a plethora of such comparisons across conferences/journals. Although I still think it is not fair to compare DP with k-anonymity, it is an open question how to correlate the two. Hence, it is out of scope of your paper. I accept your rebuttal.
% 
% ^.

\section{Experiments and Results}

In order to evaluate our proposed methods we perform various experiments. Our
main objective in these experiments is to validate how the performance of the
proposed privacy preserving classification varies as we change the value
of $AC$---user-defined privacy threshold metric. We also compare the
performance of our proposed utility preserving anonymization methods with other
existing anonymization methods such as $k$-anonymity and differential privacy.  It is important to note that we do not claim that our methods provide a better utility with identical privacy protection as other methods, rather we claim that our methods provide adequate privacy protection which is suitable for high dimensional sparse microdata with a much
superior AUC value---a classification utility metric which we want to maximize in our problem setup. We use three real-world datasets for our experiments. All three datasets consist of entities that are labeled with 2 classes. 
The number of entities, the number of features, the distribution of
the two classes (\#postive and \#negative), and the dataset density (fraction of non-zero cell values) 
are shown in Table~\ref{tab:data}. 

\subsection{Privacy Preserving Classification Tasks}

Below, we discuss the datasets and the privacy preserving classification tasks
that we solve using our proposed methods.\\

\noindent {\bf Entity Disambiguation (ED)~\cite{zhang2016bayesian}.} The objective of this
classification task is to identify whether the name reference at a row in the
data matrix maps to multiple real-life persons or not.  Such an exercise is
quite common in the Homeland Security for disambiguating multiple
suspects from their digital footprints~\cite{zhang2014name,Zhang.Saha.ea:15}. 
Privacy of the people in such a dataset
is important as many innocent persons can also be listed as a
suspect. Given a set of keywords that are associated with a name reference, we
build a binary data matrix for solving the ED task. We use
Arnetminer\footnote{\url{http://arnetminer.org}} academic publication data.  In
this dataset, each row is a name reference of one or multiple researchers, and
each column is a research keyword within the computer science research umbrella. A
`1' entry represents that the name reference in the corresponding row has
used the keyword in her (or their) published works. 
In our dataset, there are 148 rows which are labeled such that half of the people in this dataset are pure 
entity (a negative case), and the rest of them are multi-entity (a positive case). 
The dataset contains 552 attributes (keywords).

To solve the entity disambiguation problem we first perform topic modeling over
the keywords and then compute the distribution of entity $u$'s keywords across
different topics.  Our hypothesis is that for a pure entity the topic
distribution will be concentrated on a few related topics, but for an impure
entity (which is composed of multiple real-life persons) the topic distribution
will be distributed over many non-related topics. We use this idea to build a
simple classifier which uses an entropy-based score $E(u)$ for an entity $u$ as below:

\begin{equation}
\label{eq:entropy}
E(u) = - \sum_{k=1}^{\mid T \mid} P(u \mid T_k) \log P(u \mid T_k)
\end{equation}

where $P(u \mid T_k)$ is the probability of $u$ belonging to topic $T_k$, and
$|T|$ represents the pre-defined number of topics for topic modeling. Clearly,
for a pure entity the entropy-based score $E(u)$ is relatively smaller than the
same for a non-pure entity. We use this score as our predicted value and
compute AUC (area under ROC curve) to report the performance of the classifier.

\noindent {\bf Adult.} The Adult dataset
\footnote{\url{https://archive.ics.uci.edu/ml/datasets/Adult}} 
is based on census data and has been
widely used in earlier works on $k$-anonymization~\cite{Iyengar:02}.  For our
experiments, we use eight of the original attributes; these are age, work
class, education, marital status, occupation, race, gender, and hours-per-week.
The classification task is to determine whether a person earns over 50K a year
or not. Among all of the attributes, gender is originally binary. For the other
attributes, we make them as binary for our purpose. For example, for marital
attribute, we consider never-married (1) versus others (0). For race attribute,
we consider white (1) versus others (0). For the numerical attributes, we cut
them into different categories and consider a binary attribute for each
category. For instance, we partition age value in five non-overlapping
intervals: $[0, 25]$, $(25, 35]$, $(35, 45]$, $(45,55]$, and $(55, \infty]$,
and then each of the five intervals becomes a binary attribute. Similarly,
education attribute is divided into $4$ intervals and hour/week attribute is
divided into $5$ interval. 
In this way, we have a total of $19$ attributes for
the Adult dataset. As we can see privacy of the individuals in such a dataset is
quite important as many people consider their personal data, such as
race, gender, marital status and so on as sensitive attributes and they are not
willing to release them to public.

\noindent {\bf Email} The last dataset, namely Email dataset
\footnote{\url{http://www.csmining.org/index.php/pu1-and-pu123a-datasets.html}}
is a collection of approximately 1099 personal email messages distributed in 10
different directories. Each directory contains both legitimate and spam
messages. To respect the privacy issue, each token including word, number, and
punctuation symbol is encrypted by a unique number.  The classification task is
to distinguish the spam email with nonspam email. We use this data to mimic
microdata (such as twitter or Facebook messages) classification. Privacy is
important in such a dataset as keyword based features in a micro-message can
potentially identify a person. In the dataset, each row is an email message,
and each column denotes a token. A `1' in a cell represents that the row
reference contains the token in the email message. 

\subsection{Experimental Setting}~\label{sec:5.2}

For our experiments, we vary the $k$ value of the proposed $k$-anonymity by
selection ($AC$) metric and run \algf\ and different variants of \algs\
independently for building projected classification datasets for which $AC$
value is at least $k$. We use the names $HamDist$ and $DistCnt$ for the two
variants of \algs\ (Algorithm~\ref{alg:2} and~\ref{alg:3}), which optimize
Hamming distance and Distinguish count greedy criteria, respectively. As we
mentioned earlier, $k$-anonymity based method imposes strong restriction which
severely affects the utility of the dataset. To demonstrate that, instead of
using $AC$, we utilize $k$-anonymity as our privacy criteria for different
variants of \algs. We call these competing methods $k$-anonymity $HamDist$, and
$k$-anonymity $DistCnt$. It is important to note that, in our experiments under the same $k$ setting, the $k$-anonymity based competing methods may not provide the same  level of privacy. For instance,  for the same $k$ value,
privacy protection of our proposed method $HamDist$ is  not the same as  that of the $k$-anonymity $HamDist$, 
simply because $k$-AC is a  relaxation of $k$-anonymity.

We also use four other methods for comparing their
performance with the performance of our proposed methods. We call these
competing methods %Non-Private Greedy, 
RF~\cite{Byun.Bertino.ea:07}, $CM$ Greedy~\cite{Iyengar:02}, Laplace-DP, and Exponential-DP. We discuss these methods below.

RF is a Randomization Flipping based $k$-anonymization technique presented
in~\cite{Byun.Bertino.ea:07}, which randomly flips the feature value such that
each instance in the dataset satisfies the $k$-anonymity privacy constraint. RF
uses clustering such that after random flipping operation, each cluster has at
least $k$ entities with the same feature values with respect to the entire
feature set. 

CM greedy represents another greedy
based method which uses Classification Metric utility criterion proposed
in~\cite{Iyengar:02} as utility metric (See definition~\ref{def:cm}).  
It assigns a generalization penalty over the rows of the dataset and uses a genetic algorithm 
for the classification task, but for a fair comparison we use CM criterion in the \algs\ algorithm 
and with the selected features we use identical setup for classification. 

Laplace-DP~\cite{Jafer.Stan.ea:14} is a method to use feature selection for $\epsilon$-differential private data publishing. 
Authors in~\cite{Jafer.Stan.ea:14} utilize Laplace mechanism~\cite{Dwork.Frank.ea:06} for $\epsilon$-differential privacy guarantee.
To compare with their method, we first compute the utility of each feature $x_{i} \in I$ as 
its true output using $HamDist$ function in Definition~\ref{def:hamdist} denoted as $H(x_{i})$. 
Then we add independently generated noise according to a Laplace distribution with $Lap(\frac{\Delta H}{\epsilon})$ to each of the $|I|$ outputs, and
the noisy output for each feature $x_{i}$ is defined as $\hat{H(x_{i})} = H(x_{i}) + Lap(\frac{\Delta H}{\epsilon})$, where $\Delta H$ is the sensitivity of $HamDist$ function. 
After that we select top-$N$ features by considering $N$ largest noisy outputs.  
On the reduced dataset, we apply a private data release method which provides $\epsilon$-differential privacy guaranty. The
general philosophy of this method is to first derive a frequency matrix of the
reduced dataset over the feature domain and add Laplace noise with $Lap(\frac{1}{\epsilon})$ to each count (known as
marginal) to satisfy the $\epsilon$-differential privacy. Then the method
adds additional data instances to match the above count. Such an approach is
discussed in~\cite{Dwork:08} as a private data release mechanism.

Exponential-DP is another $\epsilon$-differential privacy aware feature selection method. Compared to the work presented in~\cite{Jafer.Stan.ea:14},
we use exponential mechanism~\cite{Frank.Kunal:07} based $\epsilon$-differential privacy to select features.
In particular, we choose each feature $x_{i} \in I$ with probability proportional to $exp(\frac{\epsilon}{2\Delta H}H(x_{i}))$. That is, 
the feature with a higher utility score in terms of $HamDist$ function is exponentially more likely to be chosen. The private 
data release stage of Exponential-DP is as same as the one in Laplace-DP. Note that, for both Laplace-DP and Exponential-DP, prior feature selection is essential for such
methods to reduce the data dimensionality, otherwise the number of marginals is
an intractable number ($2^{|{\cal F}|}$, for a binary dataset with ${\cal F}$
features) and adding instances to match count for each such instance is practically impossible.

For all the algorithms and all the datasets (except ED) we use the LibSVM to
perform SVM classification using L2 loss with 5-fold cross validation. The only
parameter for libSVM is regularization-loss trade-off $C$ which we tune using a
small validation set.  For each of the algorithms, we report AUC and the
selected feature count (SFC).  For RF method, it selects all the features, so
for this method we report the percentage of cell values for which the bit is
flipped.
We use different $k$-anonymity by containment ($AC$) values in our experiments.
For practical $k$-anonymization, $k$ value between 5 and 10 is suggested in
the earlier work\cite{Sweeney:02}; we use three different $k$ values, which are
$5, 8$ and $11$. For a fair comparison, for both Laplace and Exponential DP, we use the same number of features
as is obtained for the case of $HamDist$ Greedy under $k=5$. 
Since $k$-anonymity and differential privacy use totally different parameter setting mechanisms (one based on $k$, and the other based on $\epsilon$), it is not easy to understand what value of $\epsilon$ in DP will make a fair comparison 
for a $k$ value of 5 in $k$-AC. So, 
for both Laplace-DP and exponential-DP,
we show the differential privacy results for
different $\epsilon$ values: $0.5, 1.0, 1.5$, and $2.0$ . Note that the original work~\cite{Dwork:08} has
suggested to use a value of 1.0 for $\epsilon$. While using DP based methods,
we distribute half of the privacy budget for the feature selection step and the remaining half to add noise into marginals in the private data release step. 
Moreover, in the feature selection procedure, we further equally divide the budget for the selection of each feature.

RF, Laplace-DP, and Exponential-DP are randomized methods, so for each dataset we run all
of them $10$ times and report the average AUC and standard deviation. For each
result table in the following sections, we also highlight the best results in
terms of AUC among all methods under same $k$ setting. We run all the experiments 
on a 2.1 GHz Machine with 4GB memory running Linux operating system.

\begin{table*}[t!]
\centering
\scalebox{1.00}{
\begin{tabular}{l c c c}
\toprule
\multirow{2}{*}{Method} &  \multicolumn{3}{c}{AUC (Selected Feature Count)}  \\ \cline{2-4}
{} & k=5 & k=8 & k=11 \\ \midrule
%\hline
\algf & 0.82 (61) & 0.81 (43) & 0.79 (32)  \\
%\hline
$HamDist$ & {\bf 0.88 (27)} & {\bf 0.88 (24)} & {\bf 0.81 (16)} \\
%\hline
%\hline
%\hline
$DistCnt$ & 0.81 (11) & 0.81 (11) & 0.80 (10) \\ \midrule
CM Greedy ~\cite{Iyengar:02} & 0.68 (2) & 0.68 (2) & 0.68 (2) \\
RF ~\cite{Byun.Bertino.ea:07} & 0.75$\pm$0.02 (11.99\%) & 0.73$\pm$0.03 (14.03\%) & 0.72$\pm$0.02 (16.49\%) \\ 
$k$-anonymity $HamDist$ & 0.55 (3) & 0.55 (2) & 0.55 (2) \\
$k$-anonymity $DistCnt$ & 0.79 (3) & 0.79 (3) & 0.77 (2) \\ \midrule
Full-Feature-Set   & \multicolumn{3}{c} {0.87 (552)} \\
\bottomrule
%\hline
\end{tabular}}
\caption{AUC Comparison among different privacy methods for the name entity disambiguation task}
\label{tab:NED}
%\vspace{-0.10in}
\end{table*}

\subsection{Name Entity Disambiguation}

In Table~\ref{tab:NED} we report the AUC value of anonymized name entity
disambiguation task using various privacy methods (in rows) for different $k$ values
(in columns). For better comparison, our proposed methods, competing methods,
and non-private methods are grouped by the horizontal lines: our proposed
methods are in the top group, the competing methods are in the middle group,
and non-private methods are in the bottom group.  
For differential privacy comparison, we show the AUC result in Figure~\ref{fig:1}~\ref{fig:2}. 
For each method, we also report the count of selected features (SFC). Since RF method uses
the full set of features; for this method the value in the parenthesis is the
percent of cell values that have been flipped. We also report the AUC
performance using full feature set (last row). As non-private method in bottom 
group has no privacy restriction, thus the result is independent of $k$.

For most of the methods increasing $k$ decreases the number of
selected features, which translates to  poorer classification performance; this
validates the privacy-utility trade-off. However, for a given $k$, our proposed
methods perform better than the competing methods in terms of AUC metric for
all different $k$ values.  For instance, for $k=5$, the AUC result of RF and CM
Greedy are only $0.75$ and $0.68$ respectively, whereas different versions of proposed
\algs\ obtain AUC values between $0.81$ and $0.88$. Among the competing
methods, both Laplace-DP and Exponential-DP perform the worst ($0.51$ AUC under $\epsilon = 1.0$) 
as shown in the first group of bars in Figure~\ref{fig:1} \& \ref{fig:2}, and $k$-anonymity
$DistCnt$ performs the best (0.79 for $k$=5); yet all completing methods perform much poorer than
our proposed methods. A reason for this may be most of the competing methods are too
restrictive, as we can see that they are able to select only 2 to 3 features
for various $k$ values. In comparison, our proposed methods are able to select between 11 and 61 features, which help our methods
to retain classification utility. The bad performance of differential privacy based methods is due to the fact that in such a setting, the added
noise is too large in both feature selection and private data release steps. 
In general, the smaller the $\epsilon$, the stronger privacy guarantee the differential privacy provides. However, stronger privacy protection in terms of $\epsilon$ always leads to worse data utility in terms of AUC as shown in Figure~\ref{fig:1}~\ref{fig:2}. Therefore, even though differential privacy provides stronger privacy guarantee, the utility of data targeting supervised classification task is significantly destroyed.
For this dataset, we observe that the performance of RF is largely dependent on the percentage of flips in the
cell-value; if this percentage is large, the performance is poor. As $k$
increases, with more privacy requirement, the percentage of flips increases,
and the AUC drops.

For a sparse dataset like the one that we use for entity disambiguation,
feature selection helps classification performance. In this dataset, using full
set of features (no privacy), we obtain only 0.87 AUC value, whereas using less
than 10\% of features we can achieve comparable or better AUC using our
proposed methods (when $k$=5). Even for $k=11$, our methods retain substantial
part of the classification utility of the dataset and obtain AUC value of 0.81
(see second row). %which is very close to 0.87. 
Also, note that under $k=5$ and $8$, our 
$HamDist$ performs better than using full feature set,
%Lasso based Feature Selection approach ($0.84$ AUC with $14$ selected features), 
which demonstrates our proposed privacy-aware feature selection methods not only have the competitive AUC performance, 
%as the existing state-of-the-art feature selection approaches, 
but provide strong privacy guarantees as well.

\begin{table*}[t!]
\centering
\scalebox{1.00}{
\begin{tabular}{l c c c}
\toprule
\multirow{2}{*}{Method} &  \multicolumn{3}{c}{AUC (Selected Feature Count)}  \\ \cline{2-4}
{} & k=5 & k=8 & k=11 \\ \midrule
%\hline
\algf & 0.74 (8) & 0.74 (8) & 0.75 (7)  \\
%\hline
$HamDist$ & 0.77 (9) & 0.77 (9) & 0.76 (8) \\
%\hline
%\hline
$DistCnt$ & 0.78 (10) & 0.78 (10) & 0.76 (8) \\\midrule
CM Greedy ~\cite{Iyengar:02} & 0.71 (5) & 0.71 (5) & 0.71 (5) \\
RF ~\cite{Byun.Bertino.ea:07} & {\bf 0.80$\pm$0.02 (0.60\%)} & {\bf 0.80$\pm$0.03 (1.00\%)} & {\bf 0.80$\pm$0.02 (1.44\%)} \\ 
$k$-anonymity $HamDist$ & 0.72 (8) & 0.72 (8) & 0.72 (8) \\
$k$-anonymity $DistCnt$ & 0.73 (8) & 0.70 (6) & 0.70 (6) \\ \midrule
Full-Feature-Set & \multicolumn{3}{c} {0.82 (19)} \\
\bottomrule
%\hline
\end{tabular}}
\caption{Comparison among different privacy methods for Adult dataset using AUC}
\label{tab:Ad}
%\vspace{-0.10in}
\end{table*}

\begin{table*}[t!]
\centering
\scalebox{1.00}{
\begin{tabular}{l c c c}
\toprule
\multirow{2}{*}{Method} & \multicolumn{3}{c}{AUC (Selected Feature Count)}  \\ \cline{2-4}
{} & k=5 & k=8 & k=11 \\ \midrule
%\hline
\algf & 0.94 (121) & 0.92 (66) & 0.90 (58)  \\
$HamDist$ & 0.91 (11) & 0.91 (11) & 0.91 (11) \\
$DistCnt$ & {\bf 0.95 (11)} & {\bf 0.93 (7)} & {\bf 0.93 (7)} \\\midrule
CM Greedy ~\cite{Iyengar:02} & 0.86 (3) & 0.86 (3) & 0.86 (3) \\
RF ~\cite{Byun.Bertino.ea:07} & 0.87$\pm$0.02 (1.30\%) & 0.86$\pm$0.01 (1.73\%) & 0.87$\pm$0.03 (2.03\%) \\
$k$-anonymity $HamDist$ & 0.84 (4) & 0.84 (4) & 0.84 (4) \\
$k$-anonymity $DistCnt$ & 0.81 (4) & 0.81 (4) & 0.81 (4) \\ \midrule 
Full-Feature-Set & \multicolumn{3}{c} {0.95 (24604)} \\
\bottomrule
%\hline
\end{tabular}}
\caption{Comparison among different privacy methods for Email dataset using AUC}
\label{tab:email}
%\vspace{-0.10in}
\end{table*}

\begin{figure*}[h]
\centering
\subfigure[Laplace Mechanism based Differential Privacy]
{
\label{fig:1}
\includegraphics[height=0.36\linewidth, angle=0] {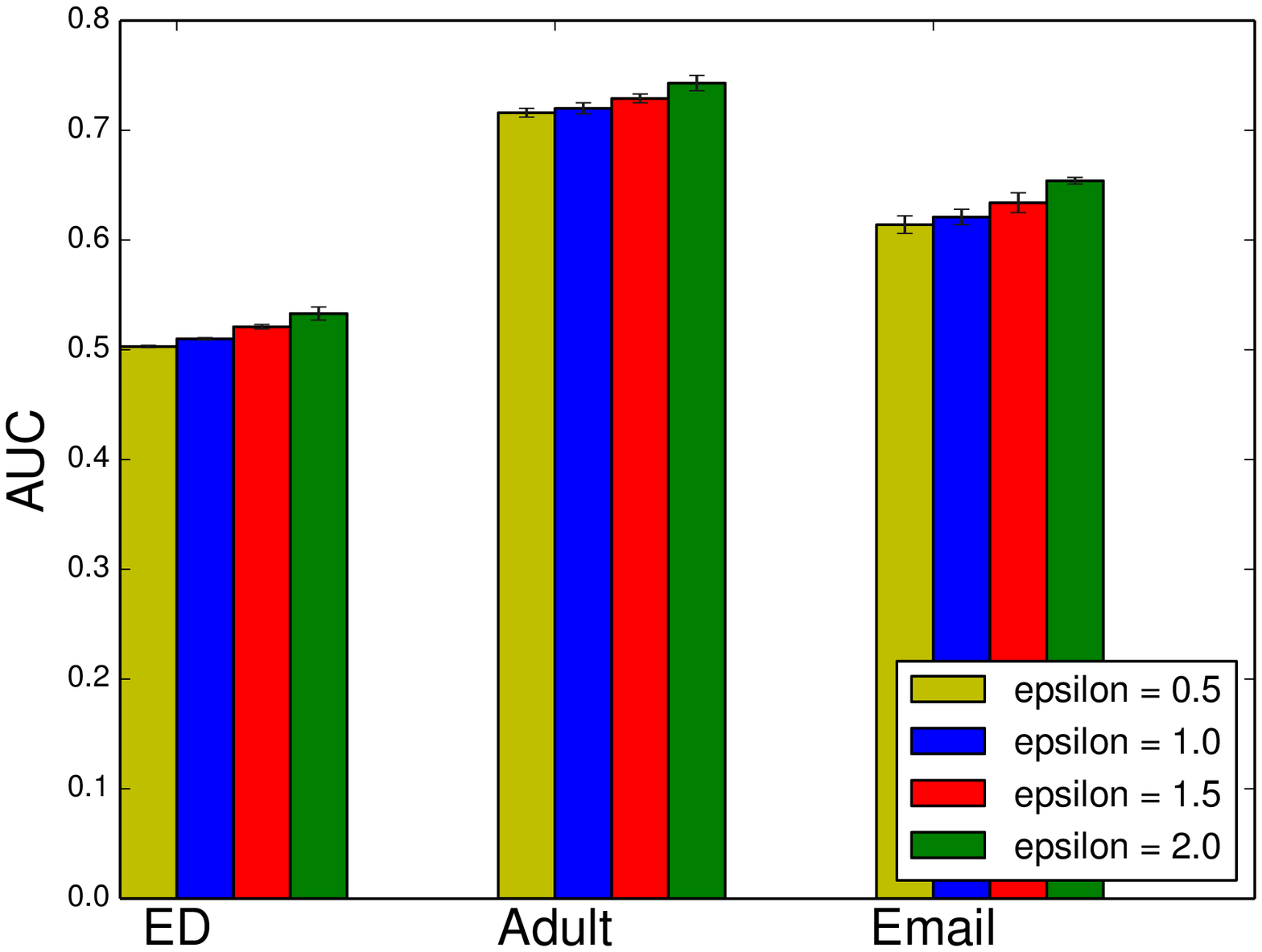}
}
\subfigure[Exponential Mechanism based Differential Privacy]
{
\label{fig:2}
\includegraphics[height=0.36\linewidth, angle=0] {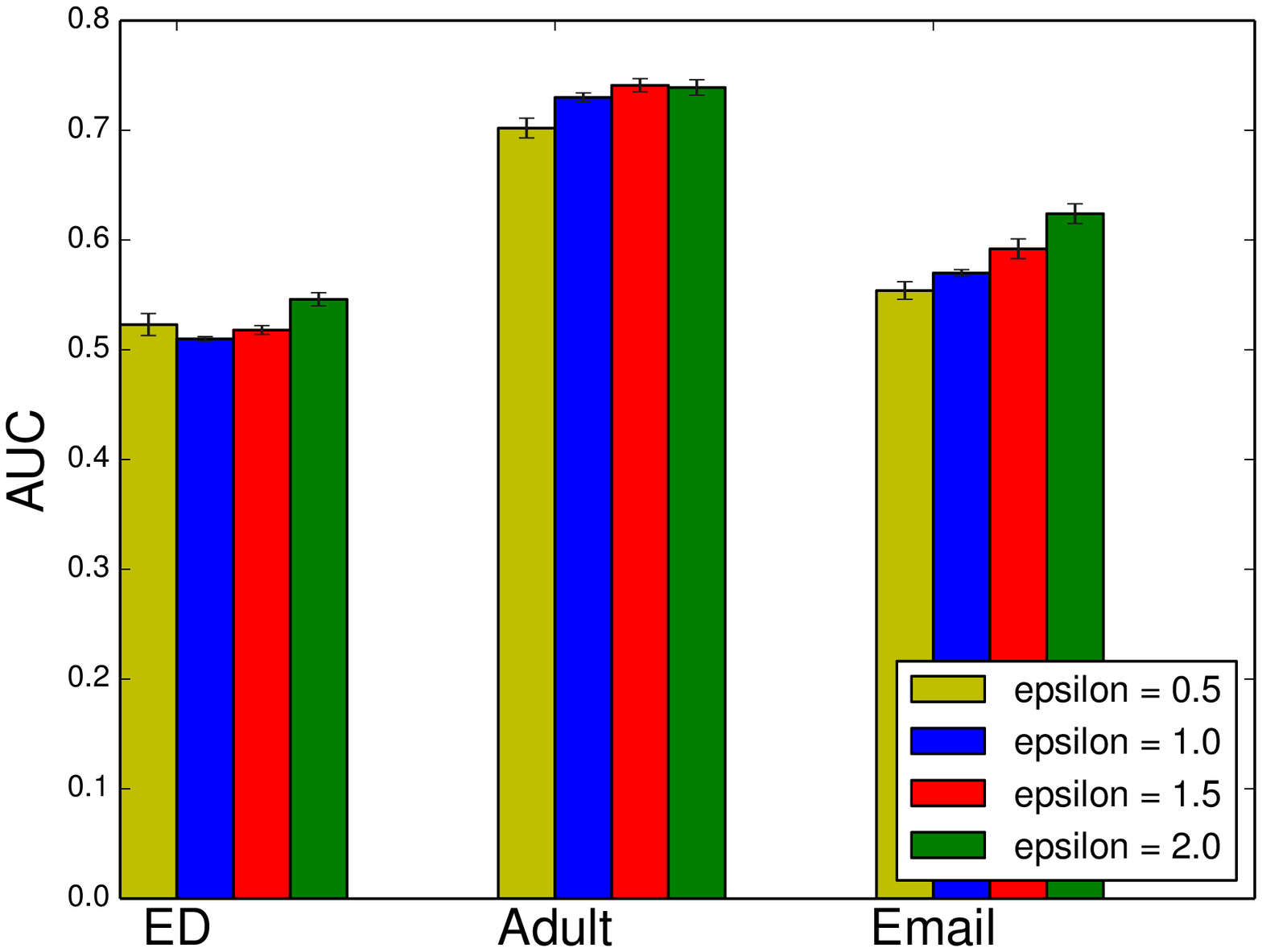}
}
\caption{Classification Performance of differential privacy based methods for different $\epsilon$ values 
on three datasets. Laplace mechanism is on the left and Exponential mechanism is on the right. Each group of bars belong to one
specific dataset, and within a group different bars represent different $\epsilon$ values.}
\vspace{-0.05in}
\end{figure*}

\subsection{Adult Data}

The performance of various methods for the Adult dataset is shown in
Table~\ref{tab:Ad}, where the rows and columns are organized identically as in
the previous table. Adult dataset is low-dimensional and dense (27.9\% values are
non-zero). Achieving privacy on such a dataset is comparatively easier, so
existing methods for anonymization work fairly well on this. As we can
observe, RF performs the best among all the methods. The good performance of RF
is owing to the very small percentage of flips which ranges from 0.60\% to
1.50\% for various $k$ values.  Basically, RF can achieve $k$-anonymity on this
dataset with very small number of flips, which helps it maintain the
classification utility of the dataset.  For the same reason, $k$-anonymity
$HamDist$ and $k$-anonymity $DistCnt$ methods are also able to retain many
dimensions (8 out of 19 for $k$=5) of this dataset, and perform fairly well. On
the other hand, different versions of \algs\ and \algf\ retain between 8 and 10
dimensions and achieve an AUC between 0.74 and 0.78, which are close to 0.80
(the AUC value for RF). Also, note that, the AUC using the full set of features
(no privacy) is $0.82$, so the utility loss due to the privacy is not
substantial for this dataset. As a remark, our method is particularly suitable for high
dimensional sparse data for which anonymization using traditional methods is
difficult. 

\subsection{Spam Email Filtering}

In Table~\ref{tab:email}, we compare AUC value of different methods for spam
email filtering task.  This is a very high dimensional data with $24604$
features. As we can observe, our proposed methods, especially $DistCnt$ and
$HamDist$ perform better than the competing methods. For example, for $k=5$,
the classification AUC of RF is $0.87$ with flip rate $1.30\%$, but using less
than $0.045\%$ of features $DistCnt$ obtains an AUC value of $0.95$, which is
equal to the AUC value using the full feature set. Again, $k$-anonymity based
methods show worse performance as they select less number of features due to
stronger restriction of this privacy metric.  For instance, for $k=5$,
$HamDist$ selects $11$ features, but $k$-anonymity $HamDist$ selects only 4
features. Due to this, classification results using $k$-anonymity constraint
are worse compared to those using our proposed $AC$ as privacy metric. As shown in Figure~\ref{fig:1}~\ref{fig:2}, 
both Laplace-DP and Exponential-DP with various privacy budget $\epsilon$ setups
perform much worse than all the competing methods in Table~\ref{tab:email}, 
which demonstrates that the significant amount of added noise during
the sanitization process deteriorates the data utility and leads to bad classification performance. Among
our methods, both $DistCnt$ and \algf\ are the best as they consistently hold
the classification performance for all different $k$ settings.

\section{Related Work}

We discuss the related work under the following two categories.

\subsection{Privacy-Preserving Data Mining}

In terms of privacy model, several privacy metrics have been widely used in
order to quantify the privacy risk of published data instances, such as
$k$-anonymity~\cite{Sweeney:02}, $t$-closeness~\cite{Li.Li:07},
$\ell$-diversity~\cite{Machanavajjhala.Kifer.ea:07}, 
and differential privacy~\cite{Dwork.Roth:14}.
Existing works on privacy preserving data mining
solve a specific data mining problem given a privacy constraint over the data
instances, such as classification~\cite{Vaidya.Clifton.ea:08},
regression~\cite{Fienberg.Jin:12},
clustering~\cite{Vaidya.Clifton:03} and frequent pattern
mining~\cite{Evfimievski.Agrawal.ea:02}.  
However the solutions proposed in these works are strongly influenced by the specific data
mining task and also by the specific privacy model. 
In fact, the majority of the above works consider distributed privacy
where the dataset is partitioned among multiple participants owning different
portions of the data, and the goal is to mine shared insights over the global
data without compromising the privacy of the local portions. 
A few other works~\cite{Bonomi.Xiong:13,Bhaskar:Laxman.ea:10} consider output privacy by
ensuring that the output of a data mining task does not
reveal sensitive information.

$k$-anonymity privacy metric, due to its simplicity and flexibility, has been
studied extensively over the years. Authors in~\cite{Atzori.Bonchi.ea:05} presents
the $k$-anonymity patterns for the application of association rule mining. Samarati~\cite{Samarati:01} proposes
formal methods of $k$-anonymity using suppression and generalization
techniques. She also introduced the concept of minimal generalization.
Meyerson et al.\cite{Meyerson.Williams:04} prove that two definitions of
$k$-optimality are ${\cal NP}$-hard: first, to find the minimum number of cell
values that need to be suppressed; second, to find the minimum number of
attributes that need to be suppressed. Henceforth, a large number of works
have explored the approximation of
anonymization~\cite{Meyerson.Williams:04,Bayardo.Agrawal:05}. However, none of
these works consider the utility of the dataset along with the privacy
requirements. Kifer et al.\cite{Kifer.Gehrke:06} propose methods that inject utility in the form of
data distribution information into $k$-anonymous and $\ell$-diverse tables.
However, the above work does not consider a classification dataset.
Iyengar~\cite{Iyengar:02} proposes a utility metric called $CM$ which is
explicitly designed for a classification dataset. However, It assigns a generalization
penalty over the rows of the dataset, but its performance is poor as we have shown in this work. 

In recent years differential privacy~\cite{Dwork:08,nguyen:hal-01424911,Lee:2012:DI:2339530.2339695} has attracted much attention in privacy
research literatures. Authors in ~\cite{Chen.Xiao.ea:15} propose a sampling based method for releasing
high dimensional data with differential privacy guarantees. ~\cite{Qardaji.Yang.ea:14} proposes a method of publishing 
differential private low order marginals for high dimensional data. Even though authors in~\cite{Chen.Xiao.ea:15, Qardaji.Yang.ea:14}
claim that they deal with high dimensional data, the dimensionality of data is at most $60$ from the experiments in their works. 
~\cite{Sanchez.Martinez.ea:14} makes use of $k$-anonymity to enhance data
utility in differential privacy. An interesting observation of this work is that 
differential privacy based method, by itself, is not a good privacy mechanism, in regards
to maintaining data utility. ~\cite{Chen.Xiong.ea:11} proposes a probabilistic top-down partitioning algorithm
to publish the set-valued data via differential privacy. 
Authors of~\cite{Mohammed.Chen.ea:11} propose to utilize exponential mechanism to release
a decision tree based classifier that satisfies $\epsilon$-differential privacy. 
However in their work, privacy is embedded in the data mining process, hence
they are not suitable as a data release mechanism, and more importantly they can only
be used along with the specific classification model within which the privacy mechanism
is built-in. 

\subsection{Privacy-Aware Feature Selection}

Empirical study for the use of feature selection in Privacy Preserving Data
Publishing has been proposed in~\cite{Yasser.Stan.ea:14}
~\cite{Jafer.Stan.ea:14}. However, in their work, they use feature selection as
an add-on tool prior to data anonymization and do not consider privacy during
the feature selection process. For our work, we consider privacy-aware feature
selection with a twin objective of privacy preservation and utility maintenance. 
To the best of our knowledge, the most similar works to ours for the use of feature selection in 
Privacy Preserving Data Publishing are presented in~\cite{Pattuk.Malin.ea:15, Matatov.Rokach.ea:10} recently. ~\cite{Pattuk.Malin.ea:15} considers privacy as a
cost metric in a dynamic feature selection process and proposes a greedy
based iterative approach for solving the task, where the data releaser requests information about
one feature at a time until a predefined privacy budget is exhausted.  
However the entropy based privacy metric presented in this work is strongly influenced
by the specific classifier. ~\cite{Matatov.Rokach.ea:10} presents a genetic approach
for achieving $k$-anonymity by partitioning the original dataset into several projections such that each one of them adheres to $k$-anonymity.
But the proposed method does not provide optimality guaranty. 

\section{Conclusion and Future Work}

In this paper, we propose a novel method for entity anonymization using feature
selection. We define a new anonymity metric called $k$-anonymity by containment
which is particularly suitable for high dimensional microdata. We also propose two
feature selection methods along with two classification utility metrics. These
metrics satisfy submodular properties, thus they enable effective greedy algorithms.
In experiment section we show that both proposed methods select good quality features 
on a variety of datasets for retaining the classification utility yet they satisfy the 
user defined anonymity constraint. 

In this work, we consider binary features. We also show experimental results
using categorical features by making them binary, so the work can easily be
extended for datasets with categorical features. 
An immediate future work is to
extend this work on datasets with real-valued features. %Another one would be to 
%consider the sensitivity level of features. In real world datasets, some features may not need
%protection (e.g., Age, gender, etc) compared to relatively more sensitive features such as health record.
Another future direction would be to consider absent attributes in the privacy model.
In real world, for some binary datasets, absent attributes can cause privacy violation, such as, they can
be used for negative association rule mining.

\section*{Acknowledgements}

We sincerely thank the reviewers for their insightful comments. This research is sponsored by both Mohammad Al Hasan's
NSF CAREER Award (IIS-1149851) and Noman Mohammed's NSERC Discovery Grants (RGPIN-2015-04147). The contents are
solely the responsibility of the authors and do not necessarily represent the official views of NSF and NSERC.

\bibliographystyle{abbrv}
\balance
%\scriptsize{
\bibliography{privacy} 
%}
\end{document}